\documentclass[runningheads]{llncs}

\usepackage{times}

\usepackage{latexsym} 

\usepackage{amsmath,amsfonts,amssymb}
\usepackage{graphics,graphicx}
\usepackage[usenames]{color}
\usepackage{algorithmic}
\usepackage{algorithm}
\usepackage{enumerate}
\usepackage{url}
\usepackage{float}
\floatstyle{ruled}
\newfloat{Algorithm}{thpb}{lop}
\floatname{Algorithm}{Algorithm}

\usepackage{subfig}



\usepackage{framed}
\newenvironment{notework}[1]{
\vspace{1em}
\begin{leftbar}
\par
\begingroup
\noindent
\textbf{#1}:\\
\indent
}{
\endgroup
\end{leftbar}
\vspace{1em}
}

\usepackage{color,soul}


\newcommand{\ie}{i.e.}

\newcommand{\cf}{cf.}

\newcommand{\wrt}{w.r.t.}
\newcommand{\virg}[1]{``#1''}
\newcommand{\card}[1]{|#1|}

\newcommand{\tuple}[1]{\ensuremath{\langle #1 \rangle}}
\newcommand{\set}[1]{\ensuremath{\{#1\}}}
\newcommand{\powset}[1]{\ensuremath{2^{#1}}}
\newcommand{\LAND}{\bigwedge}
\newcommand{\LOR}{\bigvee}

\newcommand{\argument}[1]{\ensuremath{\textrm{\textbf{#1}}}}
\newcommand{\arga}{\argument{a}}
\newcommand{\argb}{\argument{b}}
\newcommand{\argc}{\argument{c}}

\newcommand{\AFname}{\ensuremath{AF}}
\newcommand{\setargs}{\ensuremath{\mathcal{A}}}
\newcommand{\setattacks}{\ensuremath{\mathcal{R}}}
\newcommand{\AF}[1]{\tuple{\setargs_{#1}, \setattacks_{#1}}}
\newcommand{\anAF}{\AF{}}
\newcommand{\anAFsymbol}{\ensuremath{\Gamma}}
\newcommand{\attacks}[2]{\ensuremath{#1 \rightarrow #2}}

\newcommand{\dungconffree}{conflict--free}
\newcommand{\dungacceptable}{acceptable}

\newcommand{\dungadmissible}{admissible}
\newcommand{\dungcomplete}{complete}

\newcommand{\dungpreferred}{preferred}

\newcommand{\aset}{\ensuremath{S}}

\newcommand{\encoding}[2]{\ensuremath{C_{#1}^{#2}}}

\newcommand{\textlabel}[1]{\ensuremath{\mathrm{\mathtt{#1}}}}
\newcommand{\textin}{\ensuremath{\textlabel{in}}}
\newcommand{\textout}{\ensuremath{\textlabel{out}}}
\newcommand{\textundec}{\ensuremath{\textlabel{undec}}}

\newcommand{\Labfun}{\ensuremath{\mathcal{L}ab}}


\newcommand{\satproblem}{\ensuremath{\Pi}}


\newcommand{\swaspartix}{\textbf{ASP}}
\newcommand{\swaspartixmeta}{\textbf{ASP-META}}
\newcommand{\swnofal}{\textbf{NOF}}
\newcommand{\swprefprecosat}{\textbf{PS-PRE}}
\newcommand{\swprefglucose}{\textbf{PS-GLU}}

\newcommand{\NOME}{PrefSat}
\newcommand{\SAT}{SAT}
\newcommand{\gensem}{\sigma}
\newcommand{\setgenext}[2]{\mathcal{E}_{#1}(#2)}
\newcommand{\CO}{\mathcal{CO}}

\newcommand{\PR}{\mathcal{PR}}

\newcommand{\attackers}[1]{#1^{-}}
\newcommand{\VARS}[1]{\mathcal{V}(#1)}
\newcommand{\rp}{\textbf{repeat-until}}

\newcommand{\SATSOLVER}{\ensuremath{\mathit{SS}}}
\newcommand{\INARGS}{\ensuremath{\mathit{INARGS}}}

\newcommand{\cinr}{C_{\textin}^\rightarrow}
\newcommand{\cinl}{C_{\textin}^\leftarrow}
\newcommand{\coutr}{C_{\textout}^\rightarrow}
\newcommand{\coutl}{C_{\textout}^\leftarrow}
\newcommand{\cundecr}{C_{\textundec}^\rightarrow}
\newcommand{\cundecl}{C_{\textundec}^\leftarrow}
\newcommand{\cinrl}{C_{\textin}^\leftrightarrow}
\newcommand{\coutrl}{C_{\textout}^\leftrightarrow}
\newcommand{\cundecrl}{C_{\textundec}^\leftrightarrow}

\begin{document}
\title{Computing Preferred Extensions in Abstract Argumentation: a \SAT-based Approach\\(Technical Report)\thanks{Extended report of the paper: 
Cerutti, F., Dunne, P.E., Giacomin, M., Vallati, M.: Computing Preferred Extensions in Abstract Argumentation: a \SAT-based Approach. In Black, L., Modgil, S., Oren, N.: Theory and Applications of Formal Argumentation, Lecture Notes in Computer Science, Springer Berlin Heidelberg, to appear}}
\titlerunning{Computing Preferred Extensions in Abstract Argumentation: a \SAT-based Approach}

\author{Federico Cerutti\inst{1} \and Paul E. Dunne\inst{2} \and Massimiliano Giacomin\inst{3} \and Mauro Vallati\inst{4}}

\institute{School of Natural and Computing Science,
King's College,
University of Aberdeen,\\
AB24 3UE, Aberdeen, United Kingdom\\
\email{f.cerutti@abdn.ac.uk}\\[1em]
\and Department of Computer Science,
Ashton Building,
University of Liverpool,
Liverpool \\ L69 7ZF, United Kingdom\\
\email{ped@csc.liv.ac.uk}\\[1em]
\and Department of Information engineering,
University of Brescia,
via Branze, 38,
25123, Brescia, Italy\\
\email{massimiliano.giacomin@ing.unibs.it}\\[1em]
\and School of Computing and Engineering,
University of Huddersfield,
Huddersfield, HD1 3DH, United Kingdom \\
\email{m.vallati@hud.ac.uk}}

\maketitle

\begin{abstract}
This paper presents a novel \SAT-based approach for the computation of extensions in abstract argumentation, with focus on preferred semantics, and an empirical evaluation of its performances. The approach is based on the idea of reducing the problem of computing complete extensions to a \SAT{} problem and then using a depth-first search method to derive preferred extensions. The proposed approach has been tested using two distinct \SAT{} solvers and compared with three state-of-the-art systems for preferred extension computation. It turns out that the proposed approach delivers significantly better performances in the large majority of the considered cases.
\end{abstract}

\section{Introduction}

Dung's theory of abstract argumentation frameworks \cite{dung1995} provides a general model, which is widely recognized as a fundamental reference in computational argumentation in virtue of its simplicity, generality, and ability to capture a variety of more specific approaches as special cases.
An abstract argumentation framework (\AFname) consists of a set of arguments and of an \emph{attack} relation between them. The concept of \emph{extension} plays a key role in this simple setting, where an \emph{extension} is intuitively a set of arguments which can \virg{survive the conflict together}. Different notions of extensions and of the requirements they should satisfy correspond to alternative \emph{argumentation semantics}, whose definitions and properties are an active investigation subject since two decades (see \cite{baroni&giacomin2009,KER2011} for an introduction).

The main computational problems in abstract argumentation are naturally related to extensions and can be partitioned into two classes: \emph{decision} problems and \emph{construction} problems.
Decision problems pose yes/no questions like \virg{Does this argument belong to one (all) extensions?} or \virg{Is this set an extension?}, while construction problems require to explicitly produce some of the extensions prescribed by a semantics. In particular, \emph{extension enumeration} is the problem of constructing all the extensions prescribed by a given semantics for a given \AFname{}.
The complexity of extension-related decision problems has been deeply investigated and, for most of the semantics proposed in the literature they have been proven to be intractable. Intractability extends directly to construction/enumeration problems, given that their solutions provide direct answers to decision problems.

Theoretical analysis of worst-case computational issues in abstract argumentation is in a state of maturity with the available complexity results covering all Dung's traditional semantics and several subsequent prominent approaches in the literature (for a summary see \cite{dw:2009}).
On the practical side, however, the investigation on efficient algorithms for abstract argumentation and on their empirical assessment is less developed, with few results available in the literature.
This paper contributes to fill this gap by proposing a novel approach and implementation for enumeration of Dung's \emph{preferred extensions}, corresponding to one of the most significant argumentation semantics, and comparing its performances with other state-of-the-art implemented systems.
We focus on extension enumeration since it can be considered the most general problem, \ie{} its solution provides complete information concerning the justification status of arguments (making it possible to determine, for instance, if two arguments cannot be accepted in the same extension) and the proposed approach can be easily adapted to solve also the decision problems mentioned above. 

The paper is organized as follows.
Section \ref{sec:background} recalls the necessary basic concepts and state-of-the-art background. Section \ref{sec:methodology} introduces the proposed approach while Section \ref{sec:empirical-analysis} describes the test setting and comments the experimental results. 
Section \ref{sec:related-works} provides a comparison with related works and then
Section \ref{sec:conclusions} concludes the paper.

\section{Background}
\label{sec:background}

An argumentation framework \cite{dung1995} consists of a set of arguments\footnote{In this paper we consider only \emph{finite} sets of arguments.} and a binary attack relation between them. 

\begin{definition}
An \emph{argumentation framework} (\AFname) is a pair $\anAFsymbol = \anAF$ where $\setargs$ is a set of arguments and $\setattacks \subseteq \setargs \times \setargs$. We say that \argb{} \emph{attacks} \arga{} iff $\tuple{\argb,\arga} \in \setattacks$, also denoted as $\attacks{\argb}{\arga}$.
The set of attackers of an argument $\arga$ will be denoted as $\attackers{\arga} \triangleq \set{\argb : \attacks{\argb}{\arga}}$.

\end{definition}

The basic properties of \dungconffree ness, acceptability, and admissibility of a set of arguments are fundamental for the definition of argumentation semantics.

\begin{definition}
\label{def:recall}
Given an $\AFname$ $\anAFsymbol = \anAF$:
\begin{itemize}
  \item a set $\aset \subseteq \setargs$ is \emph{\dungconffree} 
        if $\nexists~ \arga, \argb \in \aset$ s.t. $\attacks{\arga}{\argb}$;
  \item an argument $\arga \in \setargs$ is \emph{\dungacceptable} with respect to a set $\aset \subseteq \setargs$
        if $\forall \argb \in \setargs$ s.t. $\attacks{\argb}{\arga}$, 
        $\exists~ \argc \in \aset$ s.t. $\attacks{\argc}{\argb}$;
  \item a set $\aset \subseteq \setargs$ is \emph{\dungadmissible} 
        if $\aset$ is \dungconffree{} and every element of $\aset$ is \dungacceptable{} with respect to $\aset$.
\end{itemize}
\end{definition}

An argumentation semantics $\gensem$ prescribes for any \AFname{} $\anAFsymbol$ a set of \emph{extensions}, denoted as $\setgenext{\gensem}{\anAFsymbol}$, namely a set of sets of arguments satisfying some conditions dictated by $\gensem$. In \cite{dung1995} four \virg{traditional} semantics were introduced, namely \emph{complete}, \emph{grounded}, \emph{stable}, and \emph{preferred} semantics. Other literature proposals include \emph{semi-stable} \cite{Caminada06}, \emph{ideal} \cite{dungetal2006}, and \emph{CF2} \cite{AIJ05} semantics.
Here we need to recall the definitions of complete (denoted as $\CO$) and preferred (denoted as $\PR$) semantics only, along with a well known relationship between them.

\begin{definition}
\label{def_sem_recall}
Given an $\AFname$ $\anAFsymbol = \anAF$:
\begin{itemize}
  \item a set $\aset \subseteq \setargs$ is a \emph{\dungcomplete{} extension}, \ie{} $\aset \in \setgenext{\CO}{\anAFsymbol}$,
        iff $\aset$ is \dungadmissible{} 
        and $\forall \arga \in \setargs$ s.t. $\arga$ is \dungacceptable{} w.r.t. $\aset$,  $\arga \in \aset$;
    \item a set $\aset \subseteq \setargs$ is a \emph{\dungpreferred{} extension}, \ie{} $\aset \in \setgenext{\PR}{\anAFsymbol}$, 
        iff $\aset$ is a maximal (w.r.t. set inclusion) \dungadmissible{} set.
\end{itemize}
\end{definition}

\begin{proposition}
For any $\AFname$ $\anAFsymbol = \anAF$, $\aset$ is a preferred extension iff it is a maximal (w.r.t. set inclusion) complete extension. As a consequence $\setgenext{\PR}{\anAFsymbol} \subseteq \setgenext{\CO}{\anAFsymbol}$.
\end{proposition}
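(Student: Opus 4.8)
The plan is to prove the biconditional by establishing two inclusions, relying on the stated definitions. A preferred extension is a maximal admissible set; a complete extension is an admissible set that additionally contains every argument acceptable with respect to it. The key observation I would exploit is the characteristic function reasoning: an admissible set $\aset$ fails to be complete precisely when some argument $\arga \notin \aset$ is acceptable with respect to $\aset$, and in that case $\aset \cup \set{\arga}$ is still admissible and strictly larger.

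First I would prove that every preferred extension is a maximal complete extension. Let $\aset$ be preferred, so it is maximal admissible. I need to show $\aset$ is complete, i.e.\ that every argument acceptable w.r.t.\ $\aset$ already lies in $\aset$. The forward-looking argument here is: suppose for contradiction some $\arga$ acceptable w.r.t.\ $\aset$ is not in $\aset$. I would show $\aset' = \aset \cup \set{\arga}$ is admissible --- it remains conflict-free because $\arga$ is defended by $\aset$ against all its attackers (so $\aset$ counterattacks any argument attacking $\arga$, and one checks $\arga$ neither attacks nor is attacked within $\aset'$), and every element of $\aset'$ stays acceptable w.r.t.\ the larger set $\aset'$ by monotonicity of acceptability. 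This contradicts maximality of $\aset$ among admissible sets. Hence $\aset$ is complete. Maximality among complete extensions then follows because complete extensions are admissible, so a complete extension strictly containing $\aset$ would be an admissible set strictly containing $\aset$, again contradicting that $\aset$ is maximal admissible.

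Second I would prove the converse: every maximal complete extension is preferred. Let $\aset$ be a maximal (w.r.t.\ inclusion) complete extension. Since every complete extension is admissible, $\aset$ is admissible; I must show it is \emph{maximal} admissible. Suppose some admissible $\aset''$ strictly contains $\aset$. The idea is to find a complete extension containing $\aset''$ (hence strictly containing $\aset$), contradicting maximality of $\aset$ among complete extensions. This uses the fact that every admissible set is contained in some complete extension --- which I would justify by the standard iterated application of the characteristic (defense) operator, whose least fixed point above an admissible set is complete. That produces a complete extension $\supseteq \aset'' \supsetneq \aset$, the contradiction.

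The main obstacle is the second direction, specifically the lemma that every admissible set extends to a complete extension; the first direction is essentially a direct unfolding of definitions plus the easy admissibility-preservation check. To keep the argument self-contained I would either cite Dung's original fixpoint result or verify directly that adding acceptable arguments preserves admissibility and terminates in the finite setting, yielding a complete superset. The consequence $\setgenext{\PR}{\anAFsymbol} \subseteq \setgenext{\CO}{\anAFsymbol}$ is then immediate, since the first direction shows each preferred extension is in particular complete.
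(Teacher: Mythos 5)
The paper itself offers no proof of this proposition: it is recalled as a known result from Dung's original work, so there is nothing to compare against line by line. Judged on its own, your argument is correct and is essentially the classical one. The first direction is sound: if $\aset$ is maximal admissible and some $\arga\notin\aset$ is acceptable w.r.t.\ $\aset$, then $\aset\cup\set{\arga}$ is conflict--free (any attack between $\arga$ and $\aset$ would, via acceptability of $\arga$ and admissibility of $\aset$, force $\aset$ to attack itself) and admissible by monotonicity of acceptability, contradicting maximality; maximality among complete extensions then follows since complete extensions are admissible. The second direction correctly isolates the one nontrivial ingredient, namely that every admissible set is contained in a complete extension, which you justify by iterating the characteristic function from an admissible set until it stabilises --- unproblematic here since the paper restricts to finite frameworks. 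A marginally more economical route for that direction is to extend the admissible set $\aset''\supsetneq\aset$ to a \emph{maximal} admissible set (immediate by finiteness) and then invoke your first direction to conclude that this maximal admissible set is complete and strictly contains $\aset$; this avoids the fixpoint lemma altogether by reusing work already done. Either way the consequence $\setgenext{\PR}{\anAFsymbol}\subseteq\setgenext{\CO}{\anAFsymbol}$ follows as you state.
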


It can be noted that each extension $\aset$ implicitly defines a three-valued \emph{labelling} of arguments, as follows: an argument $\arga$ is labelled \textin{} iff $\arga \in \aset$, is labelled \textout{} iff $\exists~ \argb \in \aset$ s.t. $\attacks{\argb}{\arga}$, is labelled \textundec{} if neither of the above conditions holds.
In the light of this correspondence, argumentation semantics can equivalently be defined in terms of labellings rather than of extensions (see \cite{Caminada2006,KER2011}).
In particular, the notion of \emph{complete labelling} \cite{Caminada2009,KER2011} provides an equivalent characterization of complete semantics, in the sense that each complete labelling corresponds to a complete extension and vice versa.
Complete labellings can be (redundantly) defined as follows.

\begin{definition}
\label{def:complete-labelling}
  Let $\anAF$ be an argumentation framework. A total function $\Labfun: \setargs \mapsto \set{\textin, \textout, \textundec}$ is a \emph{complete labelling} iff it satisfies the following conditions for any $\arga \in \setargs$:
  \begin{itemize}
  \item $\Labfun(\arga) = \textin \Leftrightarrow \forall \argb \in \attackers{\arga} \Labfun(\argb) = \textout$;
  \item $\Labfun(\arga) = \textout \Leftrightarrow \exists \argb \in \attackers{\arga}: \Labfun(\argb) = \textin$; 
  \item $\Labfun(\arga) = \textundec \Leftrightarrow \forall \argb \in \attackers{\arga} \Labfun(\argb) \neq \textin \land \exists \argc \in \attackers{\arga}: \Labfun(\argc) = \textundec$;
  \end{itemize}
\end{definition}

It is proved in \cite{Caminada2006} that preferred extensions are in one-to-one correspondence with those complete labellings maximizing the set of arguments labelled \textin{}.

The introduction of preferred semantics is one of the main contribution of Dung's paper. Its name, in fact, reflects a sort of preference \wrt{} other traditional semantics, as it allows multiple extensions (differently from grounded semantics), the existence of extensions is always guaranteed (differently from stable semantics), and no extension is a proper subset of another extension (differently from complete semantics).
Also in view of its relevance, computational complexity of preferred semantics has been analyzed early \cite{DimopTorres:1996,dimop-etal:1999} in the literature, with standard decision problems in argumentation semantics resulting to be intractable in the case of $\PR$.

As to algorithms for computing preferred extensions, two basic approaches have been considered in the literature. On one hand, one may develop a dedicated algorithm to obtain the problem solution, on the other hand, one may translate the problem instance at hand into an equivalent instance of a different class of problems for which solvers are already available. The results produced by the solver have then to be translated back to the original problem.

The three main dedicated algorithms for computing preferred extensions in the literature \cite{Doutre:2001,Modgil2009b,DBLP:conf/comma/NofalDA12} share the same idea based on labellings: starting from an initial default labelling, a sequence of transitions (namely changes of labels) is applied leading to the labellings corresponding to preferred extensions. The three algorithms differ in the initial labelling, the transitions adopted, and the use of additional intermediate labels besides the three standard ones. The algorithm proposed in \cite{DBLP:conf/comma/NofalDA12} has been shown to outperform the previous ones and will be therefore taken as the only term of comparison for this family of approaches.

As to the translation approach, the main proposal we are aware of is the ASPARTIX system \cite{eglyetal2008bis}, which provides an encoding of \AFname s and the relevant computational problems in terms of Answer Set Programs which can be processed by a solver like DLV \cite{Leoneetal2006}.
Recently an alternative encoding of ASPARTIX using metaASP has been proposed  \cite{Dvorak2011} and showed to outperform the previous version when used in conjunction with gringo/claspD solver. 
ASPARTIX is a very general system, whose capabilities include the computation of preferred extensions, 
and both versions will be used as reference for this family of approaches.

\section{The \NOME{} Approach}
\label{sec:methodology}

The approach we propose, called \NOME{}, can be described as a depth-first search in the space of complete extensions to identify those that are maximal, namely the preferred extensions. Each step of the search process requires the solution of a \SAT{} problem through invocation of a \SAT{} solver. More precisely, the algorithm is based on the idea of encoding the constraints corresponding to complete labellings of an \AFname{} as a \SAT{} problem and then iteratively producing and solving modified versions of the initial \SAT{} problem according to the needs of the search process. 
The first step for a detailed presentation of the algorithm concerns therefore the \SAT{} encoding of complete labellings.

\subsection{\SAT{} Encodings of Complete Labellings}

A propositional formula over a set of boolean variables is satisfiable iff there exists a truth assignment of the variables 
such that the formula evaluates to True. Checking whether such an assignment exists is the satisfiability (\SAT) problem.
Given an \AFname{} $\anAFsymbol = \anAF$ we are interested in identifying a boolean formula, 
called \emph{complete labelling formula} and denoted as $\satproblem_{\anAFsymbol}$,
such that each satisfying assignment of the formula corresponds to a complete labelling.  
While this might seem a clear-cut task, several syntactically different encodings can be devised which, 
while being logically equivalent, can significantly affect the performance of the overall process of searching a satisfying assignment.
For instance, adding some ``redundant'' clauses to a formula may speed up the search process, 
thanks to the additional constraints.
On the other hand, increasing syntactic complexity might lead to worse performances,
thus a careful selection of the encoding is needed.

In order to explore alternative encodings, let us consider again the requirement of Definition \ref{def:complete-labelling}.
They can be expressed as a conjunction of $6$ terms,
\ie{} $\cinr \wedge \cinl \wedge \coutr \wedge \coutl \wedge \cundecr \wedge \cundecl$,
where
\begin{itemize}
\item $\cinr \equiv (\Labfun(\arga) = \textin \Rightarrow \forall \argb \in \attackers{\arga} \Labfun(\argb) = \textout)$;
\item $\cinl \equiv (\Labfun(\arga) = \textin \Leftarrow \forall \argb \in \attackers{\arga} \Labfun(\argb) = \textout)$;
\item $\coutr \equiv (\Labfun(\arga) = \textout \Rightarrow \exists \argb \in \attackers{\arga}: \Labfun(\argb) = \textin)$;
\item $\coutl \equiv (\Labfun(\arga) = \textout \Leftarrow \exists \argb \in \attackers{\arga}: \Labfun(\argb) = \textin)$;
\item $\cundecr \equiv (\Labfun(\arga) = \textundec \Rightarrow \forall \argb \in \attackers{\arga} \Labfun(\argb) \neq \textin \land \exists \argc \in \attackers{\arga}: \Labfun(\argc) = \textundec)$;
\item $\cundecl \equiv (\Labfun(\arga) = \textundec \Leftarrow \forall \argb \in \attackers{\arga} \Labfun(\argb) \neq \textin \land \exists \argc \in \attackers{\arga}: \Labfun(\argc) = \textundec)$. 
\end{itemize}

Let us also define $\cinrl \equiv \cinr \wedge \cinl$, $\coutrl \equiv \coutr \wedge \coutl$, 
$\cundecrl \equiv \cundecr \wedge \cundecl$.
The following proposition shows that Definition \ref{def:complete-labelling} is redundant, 
identifying $5$ strict subsets of the above six terms 
that equivalently characterize complete extensions\footnote{$\cinrl \wedge \coutrl$ and $\cinr \wedge \coutr \wedge \cundecr$ correspond to the alternative definitions of complete labellings in \cite{Caminada2009}, where a proof of their equivalence is provided.}.

\begin{proposition} \label{prop:5equiv}
   Let $\anAF$ be an argumentation framework. 
   A total function $\Labfun: \setargs \mapsto \set{\textin, \textout, \textundec}$ 
   is a complete labelling iff it
   satisfies any of the following conjunctive constraints for any $\arga \in \setargs$:
   (i) $\cinrl \wedge \coutrl$, (ii) $\coutrl \wedge \cundecrl$, (iii) $\cinrl \wedge \cundecrl$, 
   (iv) $\cinr \wedge \coutr \wedge \cundecr$, (v) $\cinl \wedge \coutl \wedge \cundecl$.
\end{proposition}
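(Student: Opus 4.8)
The plan is to reduce the whole statement to an elementary fact about two three-way alternatives attached to each argument. Fix $\arga \in \setargs$. On the label side, since $\Labfun$ is a total function into $\set{\textin, \textout, \textundec}$, exactly one of the propositions $\Labfun(\arga) = \textin$, $\Labfun(\arga) = \textout$, $\Labfun(\arga) = \textundec$ is true. On the attacker side, I would name the three consequents occurring in the six terms: $\alpha(\arga) \equiv (\forall \argb \in \attackers{\arga}\ \Labfun(\argb) = \textout)$, $\beta(\arga) \equiv (\exists \argb \in \attackers{\arga}: \Labfun(\argb) = \textin)$, and $\gamma(\arga) \equiv (\forall \argb \in \attackers{\arga}\ \Labfun(\argb) \neq \textin \ \land\ \exists \argc \in \attackers{\arga}: \Labfun(\argc) = \textundec)$. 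The preliminary step that I expect to require the most care is to show that, again because $\Labfun$ is total, exactly one of $\alpha(\arga)$, $\beta(\arga)$, $\gamma(\arga)$ holds. Exclusivity is direct; for exhaustiveness I would split on whether some attacker is labelled $\textin$ (giving $\beta$), else whether some attacker is labelled $\textundec$ (giving $\gamma$), else conclude by totality that every attacker is $\textout$ (giving $\alpha$), taking care that the empty-attacker case falls vacuously under $\alpha$.

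With these two \virg{exactly one of three} alternatives in place, I would rewrite the six terms abstractly. Using the pairing $(\textin, \alpha)$, $(\textout, \beta)$, $(\textundec, \gamma)$, the term $\cinr$ is the implication $(\Labfun(\arga) = \textin) \Rightarrow \alpha(\arga)$ and $\cinl$ is its converse, and similarly $\coutr, \coutl$ relate $\textout$ with $\beta$, while $\cundecr, \cundecl$ relate $\textundec$ with $\gamma$. By Definition \ref{def:complete-labelling}, $\Labfun$ is a complete labelling iff all six hold for every $\arga$, \ie{} iff each of the three biconditionals holds. One direction of the proposition is then immediate, since each of (i)--(v) is a sub-conjunction of the six terms and is therefore entailed by a complete labelling.

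For the converse I would establish two observations about a pair of three-way alternatives that (as above) each have exactly one true disjunct. First, if the biconditional holds for two of the three paired positions, it holds for the third, because the remaining label-proposition is the negation of the disjunction of the other two, and likewise for the remaining attacker-condition. Second, if the forward implication (label $\Rightarrow$ condition) holds in all three positions, then all three converses hold as well: if the true condition sits in position $k$, the unique true label, say in position $j$, forces the condition in position $j$ by the assumed implication, so $j = k$ by exclusivity, yielding the converse in position $k$; the dual statement, with labels and conditions interchanged, handles the all-backward case. These two observations settle the five cases uniformly: cases (i), (ii), (iii) each provide two complete biconditionals and recover the third by the first observation; case (iv) provides the three forward implications and case (v) the three backward ones, each completed to full biconditionals by the second observation. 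In every case all six terms hold for every $\arga$, so $\Labfun$ is a complete labelling, which closes the equivalence.
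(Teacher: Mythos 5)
Your proposal is correct and follows essentially the same route as the paper's proof: the paper likewise disposes of cases (i)--(iii) by appealing to $\Labfun$ being a function (your first observation) and handles (iv) and (v) by the same elimination-by-exclusivity argument that underlies your second observation. The only difference is presentational --- you make explicit the ``exactly one of $\alpha,\beta,\gamma$ holds'' structure and factor the five cases through two general lemmas, where the paper argues each case concretely.
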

\begin{proof}
  We prove that any conjunctive constraint is equivalent to $\cinrl \wedge \coutrl \wedge \cundecrl$,
  \ie{} the constraint expressed in Definition \ref{def:complete-labelling}.
  As to (i), (ii) and (iii), the equivalence is immediate from the fact that $\Labfun$ is a function. \\
  As to (iv), the constraint does not include the terms $\cinl$, $\coutl$ and $\cundecl$.
  Here we prove that $\cinl$ (and, similarly, $\coutl$ and $\cundecl$) is indeed satisfied.
  Let us consider an argument $\arga$ such that      
  $\forall \argb \in \attackers{\arga}~ \Labfun(\argb) = \textout$, and let us reason by contradiction by assuming that $\Labfun(\arga) \neq \textin$.
  Since $\Labfun$ is a function, if $\Labfun(\arga) \neq \textin$
  then either $\Labfun(\arga) = \textout$ or $\Labfun(\arga) = \textundec$.
  If $\Labfun(\arga) = \textout$, from $\coutr$
  $\exists \argb \in \attackers{\arga}: \Labfun(\argb) = \textin \neq \textout$. 
  If $\Labfun(\arga) = \textundec$, from $\cundecr$ 
  $\exists \argb \in \attackers{\arga}: \Labfun(\argb) = \textundec \neq \textout$. 
  The proof for $\coutl$ and $\cundecl$ is similar. \\
  As to (v), the proof follows the same line.
  We prove that $\cinr$ (and, similarly, $\coutr$ and $\cundecr$) is indeed satisfied.  
  Given an argument $\arga$ such that $\Labfun(\arga) = \textin$, assume by contradiction that
  $\exists \argb \in \attackers{\arga}: \Labfun(\argb) \neq \textout$.  
  Since $\Labfun$ is a function, either $\Labfun(\argb) = \textin$ or $\Labfun(\argb) = \textundec$.
  In the first case, $\coutl$ entails that $\Labfun(\arga) = \textout \neq \textin$.
  In the second case, either $\coutl$ or $\cundecl$ applies, \ie{}
  $\Labfun(\arga) \in \set{\textout, \textundec}$ thus $\Labfun(\arga) \neq \textin$.
  Following the same reasoning line, we can prove that also $\coutr$ and $\cundecr$ hold.   \qed
\end{proof}


More generally, we aim at exploring all the constraints corresponding to the $64$ possible subsets of the $6$ terms above,
characterized by a cardinality (\ie{} the number of terms) between $0$ and $6$
and partially ordered according to the $\subseteq$-relation.
Using basic combinatorics we get one constraint with cardinality $0$ (\ie{} the empty constraint),
$6$ constraints with cardinality $1$, $15$ constraints with cardinality $2$, $20$ constraints with cardinality $3$,
$15$ constraints with cardinality $4$, $6$ constraints with cardinality $5$
and one constraint with cardinality $6$ (\ie{} corresponding to Definition \ref{def:complete-labelling}).
The constraints can be partitioned into three classes:
\begin{enumerate}
   \item \emph{weak} constraints, \ie{} such that there is an argumentation framework and a labelling
                  satisfying all their terms which is not complete;
   \item \emph{correct and non redundant} constraints, \ie{} able to correctly identify complete labellings and such that
             any strict subset of their terms is weak;
   \item \emph{redundant} constraints, \ie{} able to correctly identify complete labellings and such that
             there is a strict subset of their terms which is correct.
\end{enumerate}
The next proposition and corollary provide the complete characterization of the $64$ constraints in this respect.

\begin{figure}[t]
  \centering
  \includegraphics[scale=.25]{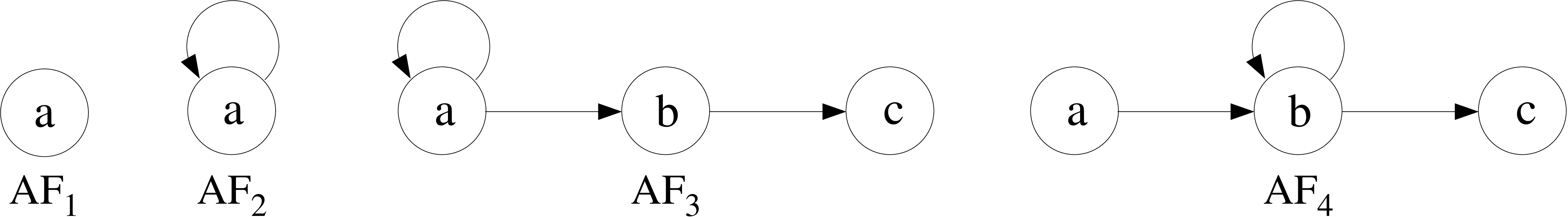}
  \caption{Identifying some weak constraints.}
  \label{fig:6weak}
\end{figure}

\begin{proposition} \label{prop:6weak}
   The following $6$ constraints are weak:
   (i) $\cundecrl \wedge \cinr \wedge \coutl$, (ii) $\cundecrl \wedge \cinl \wedge \coutr$, 
   (iii) $\coutrl \wedge \cinr \wedge \cundecl$, (iv) $\coutrl \wedge \cinl \wedge \cundecr$,
   (v) $\cinrl \wedge \coutr \wedge \cundecl$,  (vi) $\cinrl \wedge \coutl \wedge \cundecr$.
\end{proposition}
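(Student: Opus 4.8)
The plan is to establish weakness directly from its definition. A complete labelling must satisfy all six terms $\cinr,\cinl,\coutr,\coutl,\cundecr,\cundecl$ of Definition~\ref{def:complete-labelling}, and each of the six constraints listed omits exactly two of them. Hence, to show that a constraint is weak it suffices to exhibit one argumentation framework together with a labelling that satisfies the four terms present in the constraint but violates (at least one of) the two omitted terms, so that the labelling is not complete. This reduces the whole statement to producing six small witnesses, one per item, of the kind depicted in Figure~\ref{fig:6weak}.

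First I would exploit a single unattacked argument, since the vacuous quantification over $\attackers{\arga}=\emptyset$ already decouples several terms. Take the framework with the lone argument $\arga$: labelling $\arga$ as $\textout$ satisfies $\cinr,\coutl,\cundecr,\cundecl$ while violating $\cinl$ and $\coutr$, which is exactly a witness for~(i); relabelling the same $\arga$ as $\textundec$ satisfies $\cinr,\coutr,\coutl,\cundecl$ while violating $\cinl$ and $\cundecr$, giving~(iii). For~(ii) I would instead use the two-argument framework with the single attack $\attacks{\arga}{\argb}$ (so $\arga$ is unattacked) and label both arguments $\textin$: this makes $\cinr$ and $\coutl$ fail at $\argb$, while $\cinl,\coutr,\cundecr,\cundecl$ still hold throughout.

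The remaining items~(iv),~(v),~(vi) involve the $\textundec$ terms and therefore need a ``legitimately undecided'' argument. For this I would introduce a self-attacking argument $\argb$ (with $\attacks{\argb}{\argb}$), which can be consistently labelled $\textundec$ while locally satisfying both $\cundecr$ and $\cundecl$, and add the attack $\attacks{\argb}{\arga}$. Labelling $\arga$ as $\textin$ then violates $\cinr$ and $\cundecl$ while preserving $\coutrl,\cinl,\cundecr$, a witness for~(iv); labelling $\arga$ as $\textout$ violates $\coutr$ and $\cundecl$ while preserving $\cinrl,\coutl,\cundecr$, a witness for~(vi). For~(v) I would take $\argb$ unattacked and labelled $\textin$, attacking $\arga$, and label $\arga$ as $\textundec$: this preserves $\cinrl,\coutr,\cundecl$ but breaks $\coutl$ and $\cundecr$ at $\arga$.

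The only delicate part, and the main obstacle, is the bookkeeping rather than any idea: because every term is universally quantified over $\setargs$, for each witness I must check that all four \emph{present} terms hold at \emph{every} argument of the framework, not merely at the argument where the violation is engineered. In particular I must confirm that the auxiliary self-attacking argument $\argb$ satisfies all present terms, and that the vacuous cases (empty attacker sets and premises that evaluate to false) are treated correctly. Once these routine per-argument checks are completed for the six frameworks, the weakness of each constraint follows immediately.
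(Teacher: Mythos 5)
Your proposal is correct and takes essentially the same approach as the paper: for each of the six constraints you exhibit a small argumentation framework together with a non-complete labelling that satisfies the four terms present (hence can only violate omitted terms), exactly as the paper does via Figure~\ref{fig:6weak}. Your concrete witnesses coincide with the paper's for (i) and (iii) and differ only in minor details for (ii) and (iv)--(vi), and all six check out.
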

\begin{proof}
  For each constraint, we identify an argumentation framework 
  and a non complete labelling which satisfies the constraint.
  In particular, referring to Figure \ref{fig:6weak}:
  for (i), see the labelling $\set{(\arga, \textout)}$ of $AF_1$; 
  for (ii), see the labelling $\set{(\arga, \textin)}$ of $AF_2$;
  for (iii), see the labelling $\set{(\arga, \textundec)}$ of $AF_1$;
  for (iv), see the labelling $\set{(\arga, \textundec), (\argb, \textin), (\argc, \textout)}$ of $AF_3$;
  for (v), see the labelling $\set{(\arga, \textin), (\argb, \textundec), (\argc, \textundec)}$ of $AF_4$;
  for (vi), see the labelling $\set{(\arga, \textundec), (\argb, \textout), (\argc, \textin)}$ of $AF_3$.
\end{proof}

Note that, in each case of the above proof, the relevant argumentation framework admits a unique complete labelling
which drastically differs from the one satisfying the weak constraint, 
\ie{} there are arguments labelled $\textin$ that should be labelled $\textundec$ 
or there are arguments labelled $\textout$ or $\textundec$ that should be labelled $\textin$.

\begin{corollary}
  All the constraints of cardinality $0$, $1$, and $2$ are weak.
  Among the constraints of cardinality $3$, 
  $(\cinr \wedge \coutr \wedge \cundecr)$ and $(\cinl \wedge \coutl \wedge \cundecl)$ are correct and non redundant,
  the other $18$ constraints are weak.
  Among the constraints of cardinality $4$, 
  $(\cinrl \wedge \coutrl)$, $(\coutrl \wedge \cundecrl)$ and $(\cinrl \wedge \cundecrl)$ are correct and non redundant,
  $6$ constraints are weak and $6$ constraints are redundant.
  All the constraints of cardinality $5$ and $6$ are redundant.
\end{corollary}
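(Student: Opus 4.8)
The plan is to derive the whole classification from two monotonicity principles together with the two preceding results. First I would record that \emph{weakness is downward closed} and \emph{correctness is upward closed}. For the former: if a constraint $C$ is weak, witnessed by an \AFname{} and a non-complete labelling satisfying all terms of $C$, then that same labelling satisfies every subset of those terms, so any subset of $C$ is weak under the same witness. For the latter: since every complete labelling satisfies all six terms, a correct constraint is one that excludes exactly the non-complete labellings, and adjoining further terms can never re-admit a labelling, so every superset of a correct constraint is again correct. With these in hand, Proposition~\ref{prop:5equiv} supplies the correct constraints $\cinr \wedge \coutr \wedge \cundecr$ and $\cinl \wedge \coutl \wedge \cundecl$ of cardinality $3$, and $\cinrl \wedge \coutrl$, $\coutrl \wedge \cundecrl$, $\cinrl \wedge \cundecrl$ of cardinality $4$, while Proposition~\ref{prop:6weak} supplies six weak constraints of cardinality $4$.

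For the extreme layers I would argue purely by containment. Each cardinality-$5$ constraint omits one term, and a short case check shows it still contains one of the two correct cardinality-$3$ sets $\set{\cinr, \coutr, \cundecr}$ or $\set{\cinl, \coutl, \cundecl}$, while the cardinality-$6$ constraint contains both; by upward closure all of these are correct, and since each strictly contains a correct proper subset, all are \emph{redundant}. Symmetrically, to handle cardinalities $0$, $1$, $2$ it suffices by downward closure to show that every one of the $15$ two-term constraints is weak, which I would obtain by checking that each $2$-element selection of terms is contained in one of the six weak cardinality-$4$ constraints of Proposition~\ref{prop:6weak}, so that the same witnessing framework and labelling transfer.

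The two middle layers follow from the same bookkeeping. Among the cardinality-$3$ constraints, $\set{\cinr, \coutr, \cundecr}$ and $\set{\cinl, \coutl, \cundecl}$ are correct by Proposition~\ref{prop:5equiv}, and they are \emph{non-redundant} precisely because their $2$-term subsets are already known to be weak; for the remaining $18$ I would verify that each is contained in one of the six weak cardinality-$4$ constraints and is therefore weak. Among the cardinality-$4$ constraints, the three sets from Proposition~\ref{prop:5equiv} are correct and non-redundant, since each of their four $3$-term subsets is one of the weak cardinality-$3$ constraints just identified; the six from Proposition~\ref{prop:6weak} are weak; and the remaining six each contain $\set{\cinr, \coutr, \cundecr}$ or $\set{\cinl, \coutl, \cundecl}$ and are hence redundant. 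The tally $3+6+6 = \binom{6}{4}$ then closes the layer.

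The content of the argument is combinatorial rather than conceptual: the crux is the coverage claim that the six weak cardinality-$4$ constraints of Proposition~\ref{prop:6weak}, together with their subsets, account for \emph{all} $15$ two-term selections and for exactly the $18$ non-correct three-term selections, leaving only $\set{\cinr, \coutr, \cundecr}$ and $\set{\cinl, \coutl, \cundecl}$ uncovered (these two cannot be subsets of any weak constraint, else they would be weak rather than correct). I expect this enumeration, not any single inclusion, to be the step demanding care; once it is checked, the downward- and upward-closure principles propagate weakness, correctness, and redundancy to each of the $64$ constraints mechanically.
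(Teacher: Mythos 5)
Your proposal is correct and follows essentially the same route as the paper: downward closure of weakness and upward closure of correctness (which the paper uses implicitly), combined with Proposition~\ref{prop:5equiv} for the correct constraints, Proposition~\ref{prop:6weak} for the weak cardinality-$4$ witnesses, and the same combinatorial check that every non-correct constraint of cardinality at most $4$ is contained in one of the six weak ones. Making the two monotonicity principles explicit is a presentational improvement, but the substance of the argument is identical.
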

\begin{proof}
  As to the first claim, it is easy to see that any constraint having cardinality $0$, $1$ and $2$ 
  is a strict subset of at least one (weak) constraint introduced in Proposition \ref{prop:6weak}, 
  thus it is weak too.
  As to the constraints having cardinality $3$, 
  $(\cinr \wedge \coutr \wedge \cundecr)$ and $(\cinl \wedge \coutl \wedge \cundecl)$ are correct 
  by Proposition \ref{prop:5equiv}, and they are non redundant since any strict subset 
  has a cardinality strictly lower than $3$ (thus it is weak as shown above).  
  The remaining $18$ constraints of cardinality $3$ can take one of the following two forms:
  (i) $12$ constraints include $\cinrl$, $\coutrl$ or $\cundecrl$ and another single term;
  (ii) $6$ constraints include two \virg{left} and a \virg{right} terms, or vice versa.
  In both cases, it is easy to check that any of these constraints is a strict subset of 
  one of the weak constraints of Proposition \ref{prop:6weak}.
  As to the constraints of cardinality $4$,
  $(\cinrl \wedge \coutrl)$, $(\coutrl \wedge \cundecrl)$ and $(\cinrl \wedge \cundecrl)$ are correct
  by Proposition \ref{prop:5equiv}, and they are non redundant since they do not contain
  $(\cinr \wedge \coutr \wedge \cundecr)$ nor $(\cinl \wedge \coutl \wedge \cundecl)$,
  thus any subset is weak according to the considerations above.
  Moreover, $6$ constraints of cardinality $4$ are supersets of  
  $(\cinr \wedge \coutr \wedge \cundecr)$ and $(\cinl \wedge \coutl \wedge \cundecl)$  
  and thus redundant, while the other $6$ constraints are the weak ones identified in Proposition \ref{prop:6weak}.
  Finally, all constraints of cardinality $5$ and $6$ contain at least one of the correct constraints
  of Proposition \ref{prop:5equiv}, thus they are redundant.
\end{proof}

In this work, we consider six constraints, \ie{} the $5$ correct and non redundant constraints
as well as $\cinrl \wedge \coutrl \wedge \cundecrl$ as a \virg{representative} of the $13$ redundant ones,
leaving the empirical analysis of the other $12$ redundant constraints for future work.

The next step is to encode such constraints in conjunctive normal form (CNF),
as required by the \SAT{} solver.
To this purpose, we have to introduce some notation.
Letting $k = \card{\setargs}$ we can identify each argument with an index in $\set{1, \ldots k}$ or, more precisely, we can define a bijection $\phi : \set{1, \ldots, k} \mapsto \setargs$ (the inverse map will be denoted as $\phi^{-1}$). $\phi$ will be called an indexing of $\setargs$ and the argument $\phi(i)$ will be sometimes referred to as argument $i$ for brevity. For each argument $i$ we define three boolean variables, $I_i$, $O_i$, and $U_i$, with the intended meaning that $I_i$ is true when argument $i$ is labelled \textin, false otherwise, and analogously $O_i$ and $U_i$ correspond to labels \textout{} and \textundec. Formally, given $\anAFsymbol = \anAF$ we define the corresponding set of variables as $\VARS{\anAFsymbol} \triangleq \cup_{1 \leq i \leq \card{\setargs}}\set{I_i,O_i,U_i}$.
Now we express the constraints of Definition \ref{def:complete-labelling} in terms of the variables $\VARS{\anAFsymbol}$, 
with the additional condition that for each argument $i$ exactly one of the three variables has to be assigned the value True. 
For technical reasons we restrict to \virg{non-empty} extensions 
(in the sense that at least one of the arguments is labelled \textin{}),
thus we add the further condition that at least one variable $I_i$ is assigned the value True.
The detail of the resulting CNF is given in Definition \ref{def:cnf}.

\begin{definition}
\label{def:cnf}
Given an \AFname{} $\anAFsymbol = \anAF$, with $\card{\setargs}=k$ and $\phi : \set{1, \ldots, k} \mapsto \setargs$ an indexing of $\setargs$, the \emph{\encoding{1}{} encoding}  defined on the variables in $\VARS{\anAFsymbol}$, is given by the conjunction of the formulae listed below:

\begin{equation}\label{comp:i}
\begin{split}
\LAND_{i \in \set{1, \ldots, k}} 
\Bigl(& (I_i \lor O_i \lor U_i) \land (\lnot I_i \lor \lnot O_i) \Bigr. \Bigl. \land (\lnot I_i \lor \lnot U_i) \land (\lnot O_i \lor \lnot U_i) \Bigr)
\end{split}
\end{equation}

\begin{equation}\label{comp:ii}
\LAND_{\set{i | \attackers{\phi(i)} = \emptyset}} (I_i \land \lnot O_i \land \lnot U_i)
\end{equation}

\begin{equation}\label{comp:iiia}
\begin{split}
\LAND_{\set{i | \attackers{\phi(i)} \neq \emptyset}} &
\left(
I_i \lor 
\left( \LOR_{\set{ j | \attacks{\phi(j)}{\phi(i)}}} (\lnot O_j)
\right)
\right)
\end{split}
\end{equation}

\begin{equation}\label{comp:iiib}
\begin{split}
\LAND_{\set{i | \attackers{\phi(i)} \neq \emptyset}} &
\left( \LAND_{\set{j | \attacks{\phi(j)}{\phi(i)}}} \lnot I_i \lor O_j 
\right) 
\end{split}
\end{equation}

\begin{equation}\label{comp:iva}
\begin{split}
\LAND_{\set{i | \attackers{\phi(i)} \neq \emptyset}} &
\left( \LAND_{\set{j | \attacks{\phi(j)}{\phi(i)}}} \lnot I_j \lor O_i \right) 
\end{split}
\end{equation}

\begin{equation}\label{comp:ivb}
\begin{split}
\LAND_{\set{i | \attackers{\phi(i)} \neq \emptyset}} &
\left(
\lnot O_i \lor 
\left(\LOR_{\set{ j | \attacks{\phi(j)}{\phi(i)}}} I_j
\right)
\right)
\end{split}
\end{equation}

\begin{equation}\label{comp:va}
\begin{split}
\LAND_{\set{i | \attackers{\phi(i)} \neq \emptyset}} &
  \left(
  \LAND_{\set{k | \attacks{\phi(k)}{\phi(i)}}} 
   \left(
      U_i \lor \lnot U_k \lor \left(\LOR_{\set{j | \attacks{\phi(j)}{\phi(i)}}} I_j \right)
    \right)
 \right)
\end{split}
\end{equation}

\begin{equation}\label{comp:vb}
\begin{split}
\LAND_{\set{i | \attackers{\phi(i)} \neq \emptyset}} &
\left(
 \left(
 \LAND_{\set{j | \attacks{\phi(j)}{\phi(i)}}} (\lnot U_i \lor \lnot I_j)
 \right)
 \right.
 \land
 \left.
 \left(
  \lnot U_i \lor
  \left(
   \LOR_{\set{j | \attacks{\phi(j)}{\phi(i)}}} U_j
  \right)
 \right)
\right)
\end{split}
\end{equation}

\begin{equation}\label{comp:vi}
\LOR_{i \in \set{1, \ldots k}} I_i
\end{equation}
\end{definition}

\encoding{1}{} corresponds to the conditions of Definition \ref{def:complete-labelling} 
with the addition of the non-emptyness requirement. In particular, 
Formula (\ref{comp:i}) states that for each argument $i$ one and only one label has to be assigned.
Formula (\ref{comp:ii}) settles the case of unattacked arguments that must be labelled \textin.
Formulas (\ref{comp:iiia}), (\ref{comp:iiib}), (\ref{comp:iva}), (\ref{comp:ivb}), (\ref{comp:va}) and (\ref{comp:vb})  
are restricted to arguments having at least an attacker,
and correspond to $\cinl$, $\cinr$, $\coutl$, $\coutr$, $\cundecl$, $\cundecr$, respectively.
Finally, formula (\ref{comp:vi}) ensures non-emptyness, i.e. that at least one argument is labelled \textin.

The six encodings considered in this paper are provided in the following proposition, whose proof is immediate from Prop. \ref{prop:5equiv}.

\begin{proposition}
  \label{prop:equivalence}
   Referring to the formulae listed in Definition \ref{def:cnf}, the following encodings are equivalent:
   \begin{description}
      \item[$\encoding{1}{}:$]  $(\ref{comp:i}) \wedge (\ref{comp:ii}) \wedge (\ref{comp:iiia}) \wedge (\ref{comp:iiib})
                   \wedge (\ref{comp:iva}) \wedge (\ref{comp:ivb}) \wedge (\ref{comp:va}) \wedge (\ref{comp:vb}) \wedge (\ref{comp:vi})$
      \item[$\encoding{1}{a}:$]  $(\ref{comp:i}) \wedge (\ref{comp:ii}) \wedge (\ref{comp:iiia}) \wedge (\ref{comp:iiib})
                   \wedge (\ref{comp:iva}) \wedge (\ref{comp:ivb}) \wedge (\ref{comp:vi})$
    \item[$\encoding{1}{b}:$]  $(\ref{comp:i}) \wedge (\ref{comp:ii}) \wedge (\ref{comp:iva}) \wedge (\ref{comp:ivb}) 
                        \wedge (\ref{comp:va}) \wedge (\ref{comp:vb}) \wedge (\ref{comp:vi})$
   \item[$\encoding{1}{c}:$]  $ (\ref{comp:i}) \wedge (\ref{comp:ii}) \wedge (\ref{comp:iiia}) \wedge (\ref{comp:iiib})
                        \wedge (\ref{comp:va}) \wedge (\ref{comp:vb}) \wedge (\ref{comp:vi})$
   \item[$\encoding{2}{}:$]  $(\ref{comp:i}) \wedge (\ref{comp:ii}) \wedge (\ref{comp:iiib})
                       \wedge (\ref{comp:ivb}) \wedge (\ref{comp:vb}) \wedge (\ref{comp:vi})$
   \item[$\encoding{3}{}:$]  $ (\ref{comp:i}) \wedge (\ref{comp:ii}) \wedge (\ref{comp:iiia})
                   \wedge (\ref{comp:iva}) \wedge (\ref{comp:va}) \wedge (\ref{comp:vi})$
   \end{description}
\end{proposition}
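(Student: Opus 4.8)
The plan is to piggyback on Proposition~\ref{prop:5equiv} by showing that each of the six encodings is nothing but the CNF rendering of one of the constraints already proven equivalent there, all sharing a common ``skeleton''.

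First I would make explicit the bridge between satisfying assignments of the CNF and labellings. Formula~(\ref{comp:i}) forces exactly one of $I_i,O_i,U_i$ to be true for every $i$, so any satisfying assignment of an encoding containing~(\ref{comp:i}) is in bijection with a total function $\Labfun:\setargs\mapsto\set{\textin,\textout,\textundec}$; since all six encodings contain~(\ref{comp:i}), this identification is uniform across them. Under this identification, the discussion following Definition~\ref{def:cnf} already records that formulas (\ref{comp:iiia}), (\ref{comp:iiib}), (\ref{comp:iva}), (\ref{comp:ivb}), (\ref{comp:va}), (\ref{comp:vb}), restricted to arguments with a nonempty attacker set, are the CNF transcriptions of $\cinl,\cinr,\coutl,\coutr,\cundecl,\cundecr$ respectively. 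The one point that needs care is the treatment of unattacked arguments: each of the six $C$-terms, read on an argument $i$ with $\attackers{\phi(i)}=\emptyset$, has a vacuously-true antecedent on the ``left'' direction and forces $\Labfun(\phi(i))=\textin$, which is exactly what formula~(\ref{comp:ii}) imposes. Hence, for every encoding, (\ref{comp:ii}) together with the included subset of (\ref{comp:iiia})--(\ref{comp:vb}) faithfully realises the corresponding conjunction of $C$-terms over \emph{all} arguments, not merely the attacked ones.

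With this correspondence in place, the proof reduces to a matching table. Stripping off the shared formulas~(\ref{comp:i}), (\ref{comp:ii}) and the shared non-emptyness clause~(\ref{comp:vi}), the distinguishing part of each encoding is a subset of the six $C$-terms: $\encoding{1}{a}$ realises $\cinrl\wedge\coutrl$, $\encoding{1}{b}$ realises $\coutrl\wedge\cundecrl$, $\encoding{1}{c}$ realises $\cinrl\wedge\cundecrl$, $\encoding{2}{}$ realises $\cinr\wedge\coutr\wedge\cundecr$, $\encoding{3}{}$ realises $\cinl\wedge\coutl\wedge\cundecl$, and $\encoding{1}{}$ realises the full conjunction of Definition~\ref{def:complete-labelling}. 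These are precisely constraints (i)--(v) of Proposition~\ref{prop:5equiv} together with the full constraint, which that proposition proves to characterise exactly the complete labellings.

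Invoking Proposition~\ref{prop:5equiv} then finishes the argument: under the common identification, the satisfying assignments of each of the six encodings correspond to the complete labellings whose set of \textin-labelled arguments is nonempty (the latter restriction coming uniformly from~(\ref{comp:vi})). Since all six describe the same set of labellings, the encodings are logically equivalent. I expect the only real obstacle to be the bookkeeping in the previous paragraph --- checking that the restriction of formulas~(\ref{comp:iiia})--(\ref{comp:vb}) to attacked arguments, combined with~(\ref{comp:ii}), genuinely reproduces the unrestricted $C$-terms; once that is granted, the equivalence is a direct consequence of Proposition~\ref{prop:5equiv}, as the statement itself anticipates.
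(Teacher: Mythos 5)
Your proposal is correct and follows exactly the route the paper takes: the paper declares the proposition ``immediate from Proposition~\ref{prop:5equiv}'' and records the same matching of encodings to constraints (i)--(v) plus the full conjunction that you spell out. You merely make explicit the bookkeeping the paper leaves implicit --- the bijection induced by formula~(\ref{comp:i}), the role of formula~(\ref{comp:ii}) for unattacked arguments, and the shared non-emptyness clause~(\ref{comp:vi}) --- which is a faithful elaboration rather than a different argument.
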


In particular, $\encoding{1}{}$ corresponds to $\cinrl \wedge \coutrl \wedge \cundecrl$,
$\encoding{1}{a}$ to $\cinrl \wedge \coutrl$,
$\encoding{1}{b}$ to $\coutrl \wedge \cundecrl$,
$\encoding{1}{c}$ to $\cinrl \wedge \cundecrl$,
$\encoding{2}{}$ to $\cinr \wedge \coutr \wedge \cundecr$,
$\encoding{3}{}$ to $\cinl \wedge \coutl \wedge \cundecl$.

In Section \ref{sec:empirical-analysis} we evaluate the performance of the overall approach for enumerating the preferred extensions given the above six encodings. In the next section we describe the core of our proposal.

\begin{Algorithm}[H]
  \caption{Enumerating the \dungpreferred{} extensions of an \AFname}\label{algorithm:iterative-preferred-SAT}
  \begin{algorithmic}[1]
    \STATE \textbf{Input:} $\anAFsymbol = \anAF$
    \STATE \textbf{Output:} $E_p \subseteq \powset{\setargs}$ 
    \STATE $E_p~:=~\emptyset$
    \STATE $cnf~:=~\satproblem_{\anAFsymbol}$
    \REPEAT
       \STATE $cnfdf~:=~cnf$
       \STATE $prefcand~:=~\emptyset$
       \REPEAT
           \STATE $lastcompfound~:=~\SATSOLVER(cnfdf)$
           \IF{$lastcompfound~!=~\varepsilon$}
               \STATE $prefcand~:=~lastcompfound$
               \FOR{$\arga \in \INARGS(lastcompfound)$}
                       \STATE $cnfdf~:=~cnfdf \land I_{\phi^{-1}(\arga)}$
               \ENDFOR
               \STATE $remaining~:=~FALSE$
               \FOR{$\arga \in \setargs \setminus \INARGS(lastcompfound)$}
                       \STATE $remaining~:=~remaining \lor I_{\phi^{-1}(\arga)}$
               \ENDFOR
               \STATE $cnfdf~:=~cnfdf \land remaining$
           \ENDIF
       \UNTIL($lastcompfound~!=~\varepsilon  \wedge  \INARGS(lastcompfound)~!=~\setargs$)
       \IF{$prefcand~!=~\emptyset$}
           \STATE $E_p~:=~E_p \cup \set{\INARGS(prefcand)}$
           \STATE $oppsolution~:=~FALSE$
           \FOR{$\arga \in \setargs \setminus \INARGS(prefcand)$}
                  \STATE $oppsolution ~:=~oppsolution \lor I_{\phi^{-1}(\arga)}$
           \ENDFOR
           \STATE $cnf~:=~cnf \land oppsolution$
       \ENDIF
    \UNTIL($prefcand~!=~\emptyset$)
    \IF{$E_p = \emptyset$}
        \STATE $E_p = \set{\emptyset}$
    \ENDIF
    \STATE \textbf{return} $E_p$
  \end{algorithmic}
\end{Algorithm}

\subsection{Enumerating Preferred Extensions}

We are now in a position to illustrate the proposed procedure, called \NOME{} and listed in Algorithm \ref{algorithm:iterative-preferred-SAT}, to enumerate the preferred extensions of an \AFname{} $\anAFsymbol = \anAF$.

Algorithm \ref{algorithm:iterative-preferred-SAT} resorts to two external functions: $\SATSOLVER$, and $\INARGS$. $\SATSOLVER$ is a \SAT{} solver able to prove unsatisfiability too: it accepts as input a CNF formula and returns a variable assignment satisfying the formula if it exists, $\varepsilon$ otherwise. 
$\INARGS$ accepts as input a variable assignment concerning $\VARS{\anAFsymbol}$ and returns the corresponding set of arguments labelled as \textin.
Moreover we take for granted the computation of $\satproblem_{\anAFsymbol}$ from $\anAFsymbol$
(using one of the equivalent encodings shown in Proposition \ref{prop:equivalence}), which is carried out in the initialization phase (line 4).

Theorem \ref{thm:algorithm-preferred} proves the correctness of Algorithm \ref{algorithm:iterative-preferred-SAT}.

\begin{theorem}
\label{thm:algorithm-preferred}
Given an \AFname{} $\anAFsymbol = \anAF$ Algorithm \ref{algorithm:iterative-preferred-SAT} returns $E_p=\setgenext{\PR}{\anAFsymbol}$.

\begin{proof}
First, we show that termination is guaranteed.
As to the inner loop, if $\SATSOLVER$ returns $\varepsilon$ the loop exit condition is set at line 9. Otherwise the modifications of $cnfdf$ in lines 12--14 ensure that the set $\INARGS(lastcompfound)$ is strictly increasing at each iteration: this ensures that the termination condition is set after a number of iterations $\leq \card{\setargs}$.
As to the outer loop, if $prefcand = \emptyset$ at line 22, the outer loop is exited. Suppose this is not the case. Then the set $E_p$ is added a new element and the modifications of $cnf$ at lines 24--28 ensure that in the next execution of the inner loop, $lastcompfound$ cannot be an element of $E_p$. This implies that after each interaction of the outer loop, the set of potential elements returned by $\SATSOLVER$ in the inner loop is decreasing, hence ensuring that $prefcand = \emptyset$ is achieved after a number of iterations $\leq 2^{\card{\setargs}}$.


As to the correctness of the returned result, consider first the case where $\setgenext{\PR}{\anAFsymbol} = \set{\emptyset}$. In this case, as already commented, the first call to $\SATSOLVER$ returns $\varepsilon$ and the algorithm terminates immediately, returning $E_p=\set{\emptyset}$.

Turning to the case where $\setgenext{\PR}{\anAFsymbol} \neq \set{\emptyset}$ we first observe that the first call to $\SATSOLVER$ is guaranteed to be successful and, as a consequence, $E_p \neq \emptyset$ and $E_p \neq \set{\emptyset}$.
Let us first show that $E_p \subseteq \setgenext{\PR}{\anAFsymbol}$.
Any set $\aset \in E_p$ is produced by the inner loop as the value of $prefcand$.
This means that $\aset$ is a complete extension (since it is produced by $\SATSOLVER$ as a solution of $cnfdf$, which enforces the constraints of non-empty complete labellings plus, possibly, further restrictions) such that no other complete extension is a proper superset of $\aset$ (since the attempt to produce such a superset in the subsequent iteration of the inner loop failed or because $S=\setargs$). Hence $\aset$ is a maximal complete extension, \ie{} a preferred extension. It follows $\aset \in \setgenext{\PR}{\anAFsymbol}$ as desired.

To complete the proof consider now any $\aset \in \setgenext{\PR}{\anAFsymbol}$ and suppose by contradiction that $\aset \notin E_p$.
This means that in the last execution of the inner loop the first call to $\SATSOLVER$ returned $\varepsilon$, \ie{} failed to return $\aset$ (or any other complete extension).
In the inner loop, when $\SATSOLVER$ is first called, the formula $cnfdf$ is the conjunction of the formulas representing the constraints of non-empty complete labellings (\ie{} $\satproblem_{\anAFsymbol}$) and of all the formulas $oppsolution$ added in previous iterations of the outer loop.
Since $\aset$ is a solution to $\satproblem_{\anAFsymbol}$ this means that at least one formula $oppsolution$ has been added and forbids $\aset$. Now, each formula $oppsolution$ forbids exactly those solutions which are subsets of the elements already in $E_p$. Since $S \notin E_p$, it follows that $\aset$ is a proper subset of an element of $E_p$. Hence $\aset$ is not a maximal complete extension, which contradicts the hypothesis that $\aset$ is a preferred extension.
\end{proof}

\end{theorem}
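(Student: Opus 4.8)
The plan is to establish termination first and then prove $E_p = \setgenext{\PR}{\anAFsymbol}$ by a double inclusion, organising the whole argument around a single invariant linking the accumulated formula $cnf$ to the set $E_p$ built so far. For termination I would treat the two nested loops separately. In the inner loop, every iteration in which $\SATSOLVER$ returns some $lastcompfound \neq \varepsilon$ appends to $cnfdf$ the clauses forcing each currently-$\textin$ argument to stay $\textin$ together with a clause demanding at least one further $\textin$ argument; hence $\INARGS(lastcompfound)$ is strictly larger at the next successful call, and since $\setargs$ is finite this can recur at most $\card{\setargs}$ times, so the inner loop halts. For the outer loop I would note that each completed pass that sets $prefcand \neq \emptyset$ conjoins to $cnf$ a clause $oppsolution$ which strictly shrinks the assignments satisfying $cnf$ (it removes at least the just-found extension); finiteness of $\powset{\setargs}$ then forces the outer loop to stop as well.

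The core of the correctness argument is an invariant that I would prove as a standalone claim before touching either inclusion: at the top of every outer iteration, the complete extensions satisfying $cnf$ are exactly the non-empty complete extensions $\sigma$ with $\sigma \not\subseteq \aset''$ for every $\aset''$ already placed in $E_p$. This holds initially because $cnf = \satproblem_{\anAFsymbol}$ encodes precisely the non-empty complete labellings, and it is preserved because the clause $oppsolution$ added for a newly found $prefcand$ is satisfied by $\sigma$ exactly when some argument outside $prefcand$ is $\textin$, i.e. exactly when $\sigma \not\subseteq prefcand$.

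With the invariant in hand, $E_p \subseteq \setgenext{\PR}{\anAFsymbol}$ follows by taking $\aset \in E_p$, observing it was the final $prefcand$ of some inner loop and hence a complete extension, and showing it is $\subseteq$-maximal among \emph{all} complete extensions: any complete $\aset' \supsetneq \aset$ cannot satisfy $cnf$ (else the inner loop's superset-seeking call would not have failed), so by the invariant $\aset' \subseteq \aset''$ for some $\aset'' \in E_p$, forcing $\aset \subsetneq \aset''$ and contradicting the very clause $\aset \not\subseteq \aset''$ that $\aset$ satisfies. For the reverse inclusion I would take a preferred $\aset$ and suppose $\aset \notin E_p$; termination means the last pass found $cnf$ unsatisfiable, so by the invariant every non-empty complete extension, in particular $\aset$, is contained in some $\aset'' \in E_p$, and since both $\aset$ and the preferred $\aset''$ are $\subseteq$-maximal complete extensions this forces $\aset = \aset''$, a contradiction. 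The degenerate case $\setgenext{\PR}{\anAFsymbol} = \set{\emptyset}$ is handled separately: it is equivalent to $\satproblem_{\anAFsymbol}$ being unsatisfiable, whence the first $\SATSOLVER$ call returns $\varepsilon$, both loops are skipped, and the final guard resets $E_p$ to $\set{\emptyset}$.

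The main obstacle I anticipate is exactly the maximality step for $E_p \subseteq \setgenext{\PR}{\anAFsymbol}$: the inner loop only certifies that no strict superset of $\aset$ satisfies the \emph{current} $cnf$, not that none exists at all, since the earlier $oppsolution$ clauses may have excluded some complete extensions. Reconciling these two notions of ``no larger extension'' is precisely what the invariant is for, and pinning down its statement correctly, quantifying over non-empty complete extensions and over the current contents of $E_p$, is the delicate part; the remainder is routine bookkeeping on the loop variables.
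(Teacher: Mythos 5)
Your proposal is correct and follows the same overall skeleton as the paper's proof: termination via a strictly growing $\INARGS(lastcompfound)$ in the inner loop and a strictly shrinking solution set for $cnf$ in the outer loop, followed by a double inclusion with the degenerate case $\setgenext{\PR}{\anAFsymbol} = \set{\emptyset}$ handled separately. The one substantive difference is that you isolate as an explicit, inductively maintained invariant what the paper only states in passing (``each formula $oppsolution$ forbids exactly those solutions which are subsets of the elements already in $E_p$''), and you apply it to \emph{both} inclusions. This buys you something real: in the paper's argument for $E_p \subseteq \setgenext{\PR}{\anAFsymbol}$, the parenthetical ``since the attempt to produce such a superset\ldots{} failed'' silently identifies ``no proper superset satisfies the current $cnf$'' with ``no proper superset exists'', which is exactly the gap you flag; your invariant closes it by showing that any excluded superset $\aset' \supsetneq \aset$ would sit below some earlier element of $E_p$, contradicting the clause that $\aset$ itself satisfies. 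The paper uses the invariant-like observation only for the reverse inclusion, where the two arguments essentially coincide. So: same route, but your version is the more rigorous rendering of it.
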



The algorithm mainly consists of two nested \rp{} loops. 
Roughly, the inner loop (lines 8--21) corresponds to a depth-first search which, starting from a non-empty complete extension, produces a sequence of complete extensions strictly ordered by set inclusion. 
When the sequence can no more be extended, its last element corresponds to a maximal complete extension, namely to a preferred extension.
The outer loop (lines 5--30) is in charge of driving the search: it ensures, through proper settings of the variables, that the inner loop is entered with different initial conditions, so that the space of complete extensions is explored and all preferred extensions are found.

Let us now illustrate the operation of Algorithm \ref{algorithm:iterative-preferred-SAT} in detail. Given the correspondence between variable assignments, labellings, and extensions, we will resort to some terminological liberty for the sake of conciseness and clarity (e.g. stating that the solver returns an extension rather than that it returns an assignment which corresponds to a labelling which in turn corresponds to an extension).
In the first iteration of the outer loop, the assignment of line 6 results in $cnfdf= \satproblem_{\anAFsymbol}$ in virtue of the initialization of line 4.
Then the inner loop is entered and, at line 9, $\SATSOLVER$ is invoked on $\satproblem_{\anAFsymbol}$.
Due to the non-emptyness condition in $\satproblem_{\anAFsymbol}$, $\SATSOLVER$ returns $\varepsilon$ if the only complete extension (and hence the only preferred extension) of $\anAFsymbol$ is the empty set.
In this case, lines 11--19 are not executed and the loop is directly exited.
As a consequence, $prefcand$ is still empty at line 22
and also the outer loop is directly exited.
The condition of line 31 then holds, the assignment of line 32 is executed and the algorithm terminates returning $\set{\emptyset}$.

Let us now turn to the more interesting case where there is at least one non-empty complete extension.
Then, the first solver invocation returns (non deterministically) one of the non-empty complete extensions of the framework which is assigned to $lastcompfound$ at line 9.
Then the condition of line 10 is verified and $lastcompfound$ is set as the candidate preferred extension (line 11).
In lines 12--19 the formula $cnfdf$ is updated in order to ensure that the next call to $\SATSOLVER$ returns a complete extension which is a strict superset of $lastcompfound$ (if any exists).
This is achieved by imposing that all elements of $lastcompfound$ are labelled \textin{} (lines 12--14) and that at least one further argument is labelled \textin{} (lines 15--19).
In the next iteration (if any), the modified $cnfdf$ is submitted to $\SATSOLVER$.
If a solution is found, the inner loop is iterated in the same way: at each successful iteration a new, strictly larger, complete extension is found.
According to the conditions stated in line 21, iteration of the inner loop will then terminate when the call to $\SATSOLVER$ is not successful or when $lastcompfound$ covers all arguments,  since in this case no larger complete extension can be found.
If a new preferred extension has been found, it is added to the output set $E_p$ (line 23).
Then, a formula is produced which ensures that any further solution includes at least an argument not included in the already found one (lines 25--27). This formula is then added to $cnf$ (line 28). 
The outer loop then restarts resetting variables at lines 6--7 in preparation for a new execution of the inner loop.
The inner loop is entered with $cnfdf$ updated at line 6, this ensures that the call to $\SATSOLVER$ 
either does not find any solution (and then the algorithm terminates returning $E_p$ as already set) 
or finds a new complete extension which is not a subset of any of the preferred extensions already found 
and is then extended to a new preferred extension in the subsequent iterations of the loop.

\section{The Empirical Analysis}
\label{sec:empirical-analysis}

The algorithm described in the previous section has been implemented in C++ and integrated with two alternative \SAT{} solvers, namely PrecoSAT and Glucose. PrecoSAT \cite{Biere2009} is the winner of the SAT Competition\footnote{\url{http://www.satcompetition.org/}} 2009 on the Application track.
Glucose \cite{Audemard2009,Audemard2012} is the winner of the SAT Competition in 2011 and of the SAT Challenge 2012 on the Application track.

This choice gave rise to the following two systems:
\begin{itemize}
\item \NOME{} with PrecoSAT (\swprefprecosat);
\item \NOME{} with Glucose (\swprefglucose). 
\end{itemize}

To assess empirically the performance of the proposed approach with respect to other state-of-the-art systems and to compare the two \SAT{} solvers on the \SAT{} instances generated by our approach, we ran a set of tests on randomly generated \AFname s.

The experimental analysis has been conducted on 2816 \AFname s that were divided in 
%
different classes, 
according to two dimensions: the number of arguments, $\card{\setargs}$ and the criterion of random generation of the attack relation.
As to $\card{\setargs}$ we considered 8 different values, ranging from 25 to 200 with a step of 25.
As to the generation of the attack relation we used two alternative methods.
The first method consists in fixing the probability $p_{att}$ that there is an attack for each ordered pair of arguments (self-attacks are included): for each pair a pseudo-random number uniformly distributed between 0 and 1 is generated and if it is lesser or equal to $p_{att}$ the pair is added to the attack relation. We considered three values for $p_{att}$, namely 0.25, 0.5, and 0.75.
Combining the 8 values of $\card{\setargs}$ with the 3 values of $p_{att}$ gives rise to 24 test classes, each of which has been populated with 50 \AFname s.

The second method consists in generating randomly, for each \AFname, the number $n_{att}$ of attacks it contains (extracted with uniform probability between 0 and $\card{\setargs}^2$).
Then the $n_{att}$ distinct pairs of arguments constituting the attack relation are selected randomly.
Applying the second method with the 8 values of $\card{\setargs}$ gives rise to 8 further test classes, each of which has been populated with 200 \AFname s. Since the experimental results show minimal changes between the sets of \AFname s generated with the two methods, hereafter we silently drop this detail.

Further, we also considered, for each value of $\card{\setargs}$, the extreme cases of empty attack relation ($p_{att}=n_{att}=0$) and of fully connected attack relation ($p_{att}=1, n_{att}=\card{\setargs}^2$), thus adding 16 \virg{singleton} test classes.

The tests have been run on  the same hardware (a Quad-core Intel(R) Xeon(TM) CPU 2.80GHz with 4 GByte RAM and Linux operating system). 
As in the learning track of the well-known international planning
competition (IPC) \cite{Jimenez2012}, a limit of 15 minutes was imposed to compute the preferred extensions for each \AFname.
No limit was imposed on the RAM usage, but a run fails at saturation of the available memory, including the swap area. 
The systems under evaluation have been compared
with respect to the ability to produce solutions within the time limit and to the execution time (obtained as the real value of the command \texttt{time -p}).
As to the latter comparison, we adopted the IPC speed score, also borrowed from the planning community, which is defined as follows:

\begin{itemize}
\item For each test case (in our case, each test \AFname) let $T^*$ be the best execution time among the compared systems (if no system produces the solution within the time limit, the test case is not considered valid and ignored).

\item For each valid case, each system gets a score of $1/(1 + \log_{10}(T/T^*))$, where $T$ is its execution time, or a score of 0 if it fails in that case. Runtimes below 1 sec get by default the maximal score of 1.

\item The (non normalised) $IPC$ score for a system is the sum of its scores over all the valid test cases. The normalised IPC score ranges from 0 to 100 and is defined as $(IPC/{\mbox{\# of valid cases}}) * 100$.
\end{itemize}

\begin{figure}[htbp]
  \centering
    \resizebox{1\textwidth}{!}{\input{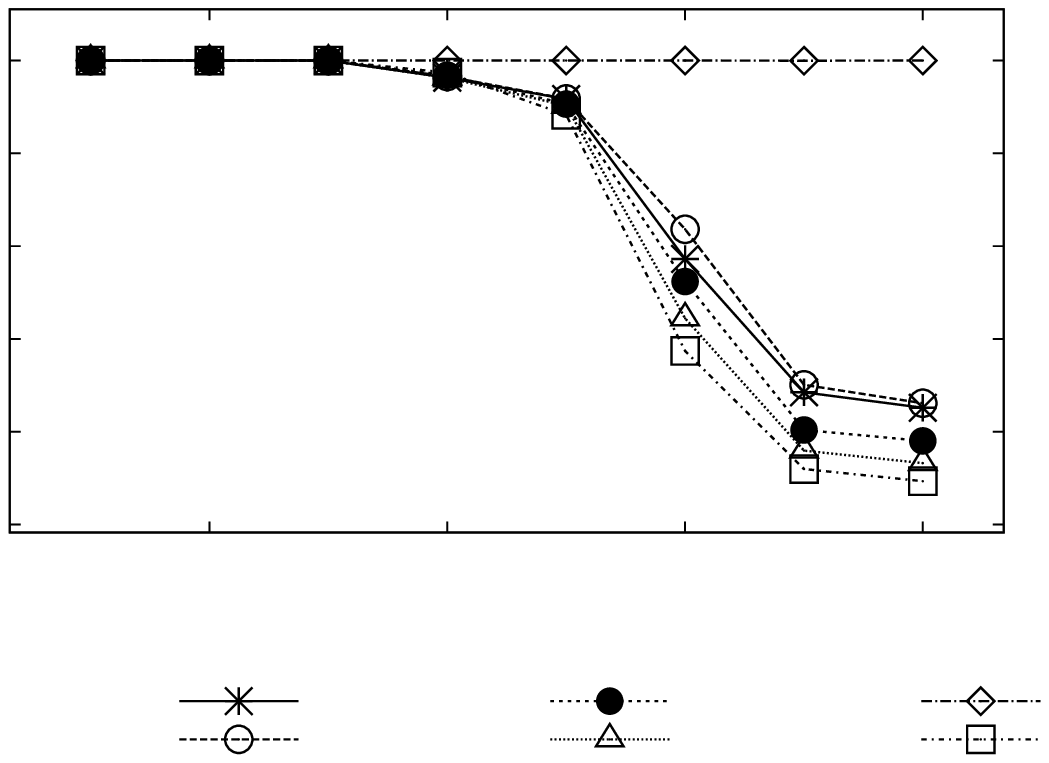}}
  \caption{IPC \wrt{} $|\setargs|$ (all test cases), comparing \swprefglucose{} using respectively encodings \encoding{1}{}, \encoding{1}{a}, \encoding{1}{b}, \encoding{1}{c}, \encoding{2}{}, and \encoding{3}{} (\cf{} Proposition \ref{prop:equivalence}).}
  \label{fig:cnf-IPC-all}
\end{figure}

\begin{table}[htbp]
  \def\arraystretch{1.5}
  \tabcolsep=0.16cm
  \centering
  {
  \begin{tabular}[h]{| c | c | c | c | c | c | c |}
  \hline
    $|\setargs|$	&\textbf{\encoding{1}{}}	&\textbf{\encoding{1}{a}}	&\textbf{\encoding{1}{b}}	&\textbf{\encoding{1}{c}}	&\textbf{\encoding{2}{}}    & \textbf{\encoding{3}{}}\\
    \hline
    25	&5.97E-03	&5.91E-03	&5.43E-03	&5.31E-03	&\textbf{6.25E-04}	&3.92E-03\\
    \hline
    50	&3.50E-02	&3.39E-02	&3.38E-02	&3.38E-02	&\textbf{9.74E-03}	&3.10E-02\\
    \hline
    75	&1.06E-01	&1.02E-01	&1.05E-01	&1.06E-01	&\textbf{2.74E-02}	&1.02E-01\\
    \hline
    100	&2.76E-01	&2.65E-01	&2.78E-01	&2.91E-01	&\textbf{6.39E-02}	&2.89E-01\\
    \hline
    125	&5.24E-01	&5.03E-01	&5.54E-01	&5.95E-01	&\textbf{1.15E-01}	&6.23E-01\\
    \hline
    150	&1.27E+00	&1.22E+00	&1.39E+00	&1.43E+00	&\textbf{2.46E-01}	&1.60E+00\\
    \hline
    175	&2.06E+00	&1.98E+00	&2.46E+00	&2.82E+00	&\textbf{4.80E-01}	&3.51E+00\\
    \hline
    200	&5.00E+00	&4.89E+00	&6.17E+00	&7.90E+00	&\textbf{1.38E+00}	&1.00E+01\\
    \hline
  \end{tabular}
  }
  \vspace{0.2cm}
  \caption{Average time (in seconds) for computing the preferred extensions according to the different labellings encoding of Proposition \ref{prop:equivalence} grouped by $|\setargs|$. In bold the best one.}
  \label{tab:time-labellings}
\end{table}

First of all, we ran an investigation on which of the alternative encodings 
introduced in Proposition \ref{prop:equivalence} performs best. 
While there are cases where \swprefprecosat{} performs better using \encoding{1}{a} and others where it performs better using \encoding{2}{} (with minimal differences on average),
it is always outperformed by \swprefglucose{} using \encoding{2}{},
thus we refer to \swprefglucose{}  to illustrate the difference of performance induced by the alternative encodings. 
In Figure \ref{fig:cnf-IPC-all}, we compare the empirical results obtained by executing \swprefglucose, 
and Table \ref{tab:time-labellings} summarises the average times. 
It is worth to mention that \swprefglucose{} always computed the preferred extensions irrespective of the chosen encoding, therefore the differences in the IPC scores are due to different execution times only.
As we can see, the overall performance is significantly dependent on the set of conditions used, where the greatest performance (considering the generated \AFname s) is \encoding{2}{}, and then in sequence, generally \encoding{1}{a}, \encoding{1}{}, \encoding{1}{b}, \encoding{1}{c} and \encoding{3}{}, although we have empirical evidences showing that on dense graphs there are situations where \encoding{3}{} performs better than \encoding{1}{} (\cf{} Fig. \ref{fig:IPC-200-perc}).

\begin{figure}[htbp]
  \centering
  \resizebox{1\textwidth}{!}{\input{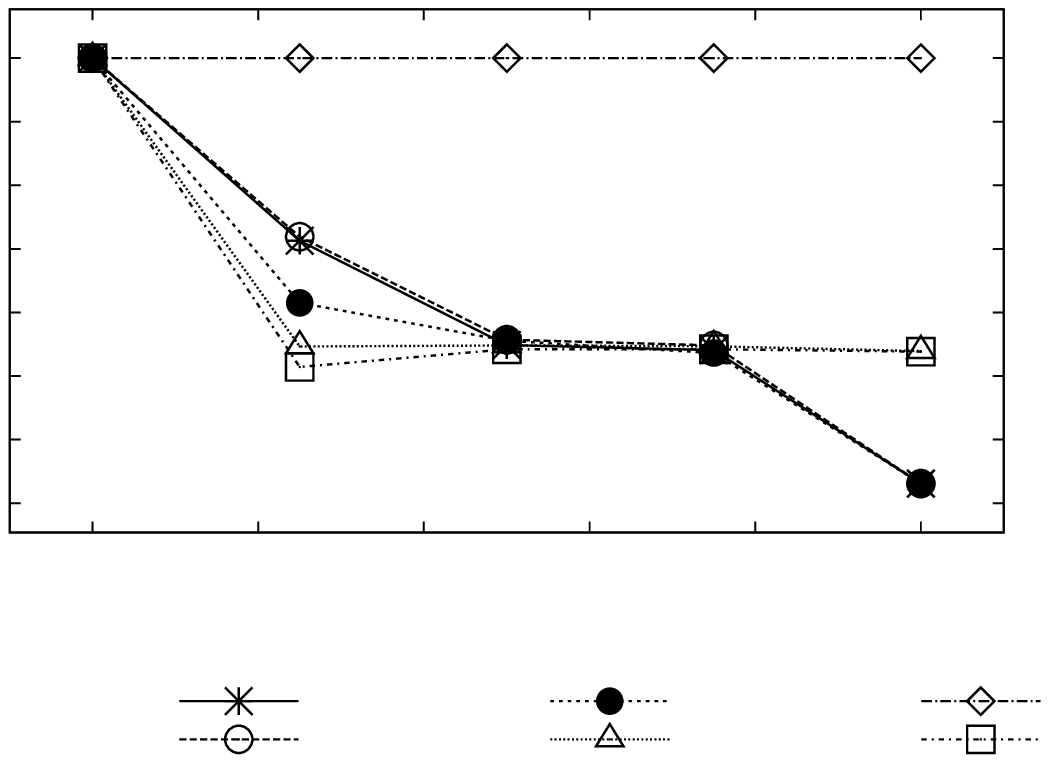}}
  \caption{IPC \wrt{} percentage of attacks when $\card{\setargs} = 200$, comparing \swprefglucose{} using respectively encodings \encoding{1}{}, \encoding{1}{a}, \encoding{1}{b}, \encoding{1}{c}, \encoding{2}{}, and \encoding{3}{} (\cf{} Proposition \ref{prop:equivalence}).}
  \label{fig:IPC-200-perc}
\end{figure}


In order to evaluate the overall performance of Algorithm \ref{algorithm:iterative-preferred-SAT} (\cf{} Section \ref{sec:methodology}), let us compare \swprefprecosat{} and \swprefglucose{} both using encoding \encoding{2}{} with the other three notable systems at the state of the art:
\begin{itemize}
\item ASPARTIX with \texttt{dlv} as ASP solver (denoted as \swaspartix);
\item ASPARTIX-META with \texttt{gringo} as grounder and \texttt{claspD} as ASP solver (denoted as \swaspartixmeta) as presented in \cite{Dvorak2011};
\item the system presented in \cite{DBLP:conf/comma/NofalDA12} (\swnofal).
\end{itemize}

None of five (considering also our \swprefglucose{} and \swprefprecosat) systems uses parallel execution.

\begin{figure}[htb]
  \centering
  \resizebox{1\textwidth}{!}{\input{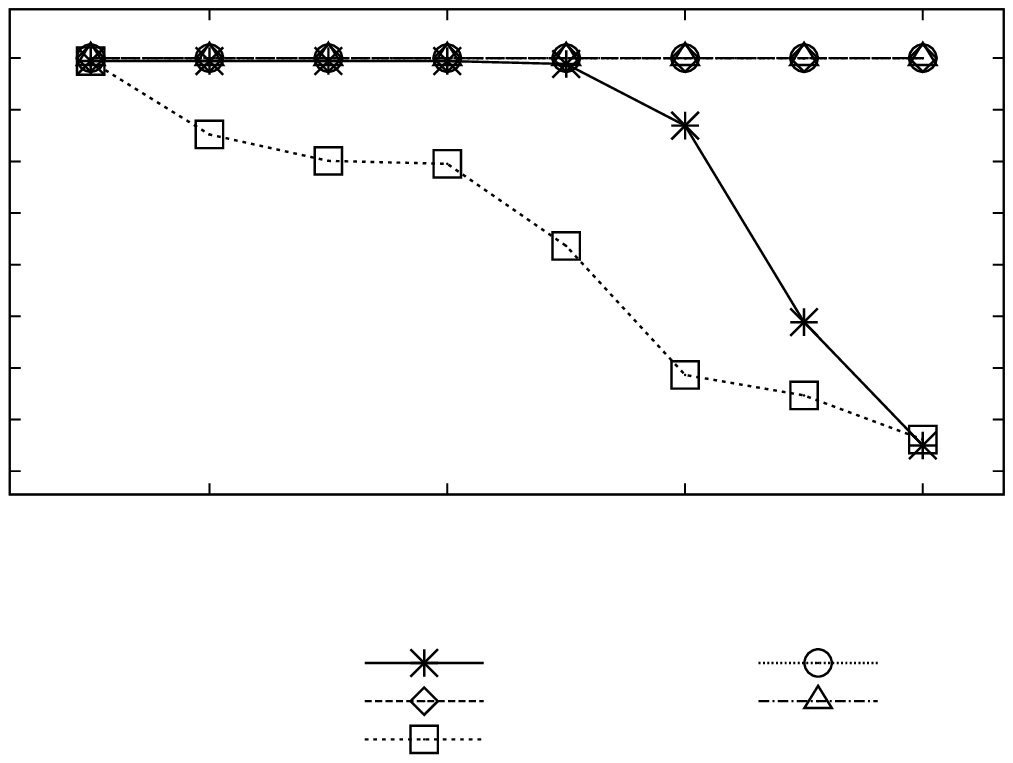}}
  \caption{Percentage of success over all test cases.}
  \label{fig:percentage-all}
\end{figure}

Concerning the ability to produce solutions, Figure \ref{fig:percentage-all} summarizes the results concerning all test cases grouped \wrt{} $\card{\setargs}$.
\swprefglucose, \swprefprecosat{} (both exploiting \encoding{2}{} encoding), and \swaspartixmeta{} were able to produce the solution in all cases.
On the other hand, the success rate of both 
\swaspartix{} and \swnofal{} decreases significantly with the increase of $\card{\setargs}$.
We observed that the failure reasons are quite different: \swaspartix{} reached in all its failure cases the 15 minutes time limit, while \swnofal{} ran out of memory before reaching the time limit.
In the light of this observation, \swnofal's evaluation has certainly been negatively affected by the relatively scarce memory availability of the test platform, but, from another perspective, the results obtained on this platform give a clear indication about the different resource needs of the compared systems.

\begin{figure}[htbp]
  \centering
  \resizebox{1\textwidth}{!}{\input{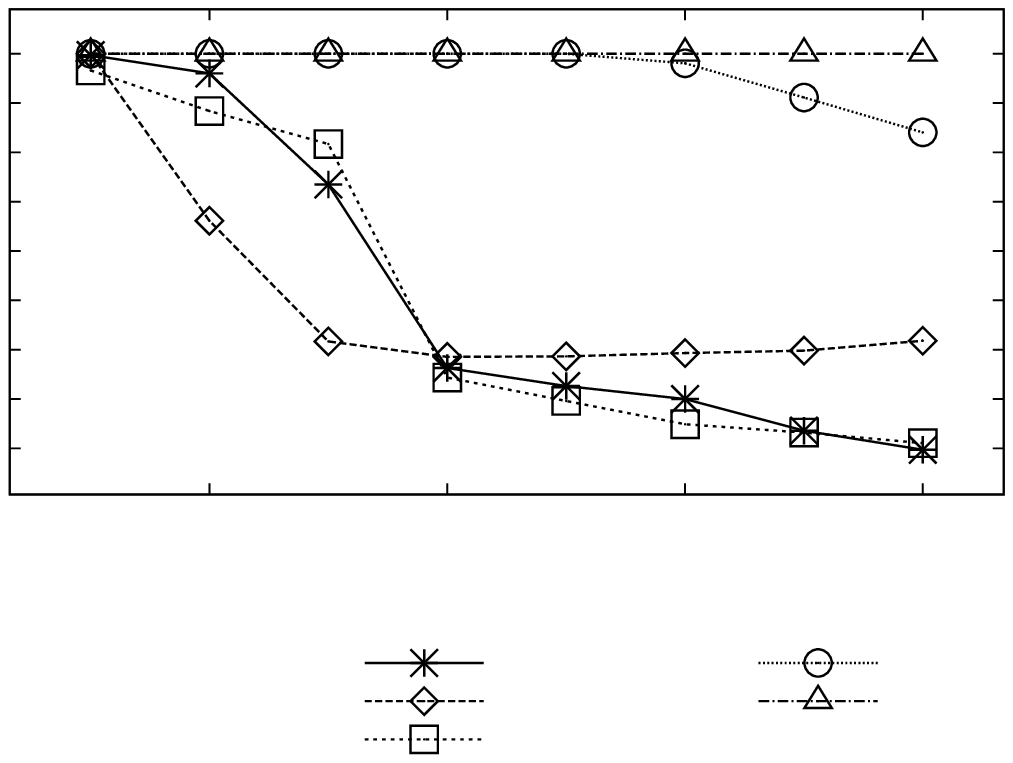}}
  \caption{IPC  \wrt{} $|\setargs|$ (all test cases).}
  \label{fig:IPC-all}
\end{figure}

\begin{table}[h]
  \def\arraystretch{1.5}
  \tabcolsep=0.16cm
  \centering
  {
  \begin{tabular}[h]{|c | c | c | c | c | c |}
    \hline
    $|\setargs|$	&\swaspartix	&\swaspartixmeta	&\swnofal	&\swprefprecosat	&\swprefglucose\\
    \hline
    25	&7.78E-02	&2.70E-01	&3.24E-01	&3.87E-03	&\textbf{6.27E-04}\\  \hline
    50	&3.32E-01	&1.00E+00	&5.43E-01	&2.32E-02	&\textbf{1.04E-02}\\  \hline
    75	&1.03E+00	&2.30E+00	&1.18E+00	&5.98E-02	&\textbf{2.96E-02}\\  \hline
    100	&3.75E+00	&4.33E+00	&3.81E+00	&1.36E-01	&\textbf{6.84E-02}\\  \hline
    125	&1.63E+01	&6.95E+00	&8.50E+00	&2.46E-01	&\textbf{1.24E-01}\\  \hline
    150	&3.16E+01	&1.16E+01	&1.47E+01	&4.59E-01	&\textbf{2.24E-01}\\  \hline
    175	&6.65E+01	&1.61E+01	&2.64E+01	&6.65E-01	&\textbf{3.21E-01}\\  \hline
    200	&1.24E+02	&2.27E+01	&5.02E+01	&1.02E+00	&\textbf{4.79E-01}\\  \hline
  \end{tabular}
  }
    \vspace{0.2cm}
  \caption{Average time (in seconds) for computing the preferred extensions needed by the five systems  grouped by $|\setargs|$ on $\AFname$ s for which all the systems computed correctly the preferred extensions. In bold the best one.}
  \label{tab:time-solvers}
\end{table}

Turning to the comparison of execution times, Figure \ref{fig:IPC-all} presents the values of normalised IPC considering all test cases grouped \wrt{} $\card{\setargs}$, while Table \ref{tab:time-solvers} shows the average time needed by the five systems for computing the preferred extension.
Both \swprefprecosat{} and \swprefglucose{} performed significantly better (note that the IPC score is logarithmic) than \swaspartix{} and \swnofal{} for all values of $\card{\setargs} > 25$, and the performance gap increases with increasing $\card{\setargs}$.
Moreover, \swprefglucose{} is significantly faster than \swprefprecosat{} for $\card{\setargs} > 175$ (again the performance gap increases with increasing $\card{\setargs}$).
\swaspartix{} and \swnofal{} obtained quite similar IPC values with more evident differences at lower values of $\card{\setargs}$.
Surprisingly, \swaspartixmeta{} performed worse that its older version \swaspartix{} (and also of \swnofal) on frameworks with number of arguments up to 100 (\cf{} Table \ref{tab:time-solvers}). Although this may seem in contrast with results provided in \cite{Dvorak2011}, it has to be remarked that the IPC measure is logarithmic \wrt{} the best execution time, while \cite[Fig. 1]{Dvorak2011} uses a linear scale, and this turned to be a disadvantage when analysing the overall performance. Indeed, the maximum difference of execution times between \swaspartix{} and \swaspartixmeta{} executed on frameworks up to 100 arguments is around $1.2$ seconds, while the axis of ordinate of \cite[Fig. 1]{Dvorak2011} ranges between 0 and 300, thus making impossible to note this difference.

Finally, it is worth to mention that there are no significant differences in the rank of the systems whether or not we specify the overall amount of attacks or not, as shown in Fig. \ref{fig:percentage-175}. Indeed, given a fixed number of arguments (\ie{} $175$), varying the number of attacks (from $0\%$ till $100\%$), \swprefglucose{} is always outperforming the other systems, or it is equivalent to \swprefprecosat. Then, if we consider percentages of attacks greater than $50\%$, it looks like that \swaspartixmeta{} is the third in terms of performance, followed by \swaspartix{} and finally \swnofal.
In addition, Fig. \ref{fig:percentage-175} displays the difficulties of \swaspartix, \swaspartixmeta, and \swnofal{} with argumentation frameworks with few (or no) attacks. It is worth to mention that \swaspartixmeta{} found the preferred extensions in the case of no attacks in $1.27$ seconds, but since  \swprefprecosat{} and \swprefglucose{} have a time almost close to $0$ seconds, the IPC value for \swaspartixmeta{} is very close to $0$ and thus indistinguishable from \swaspartix{} and \swnofal{} which instead failed to compute the preferred extensions.

\begin{figure}[htb]
  \centering
  \resizebox{1\textwidth}{!}{\input{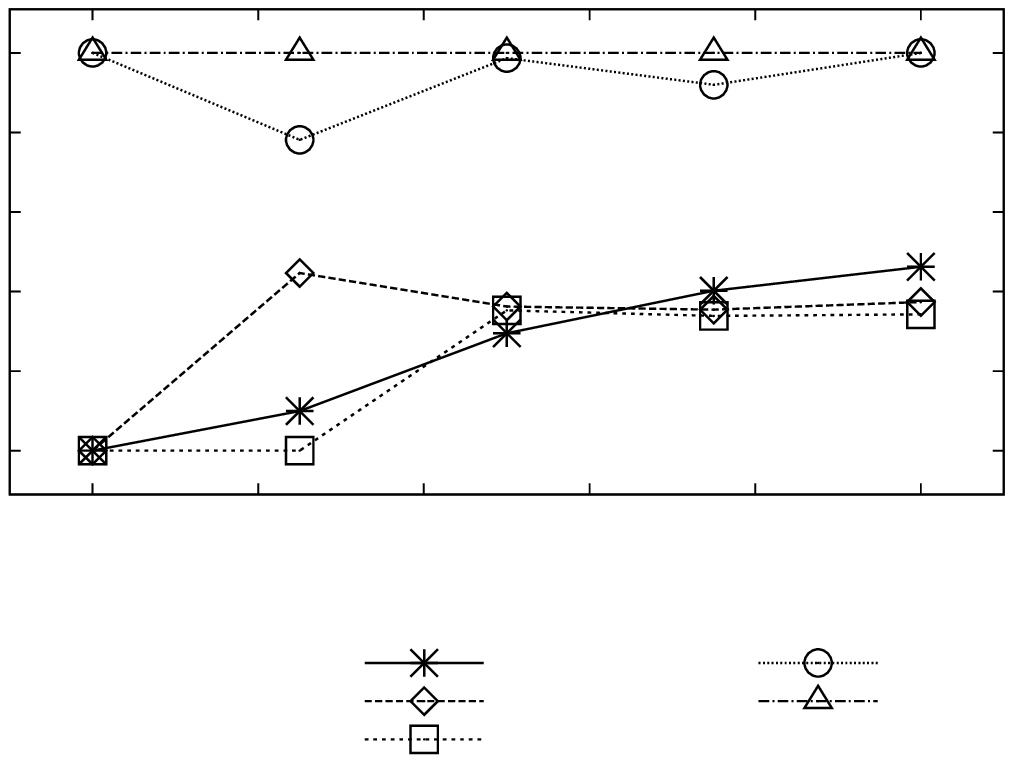}}
  \caption{IPC \wrt{} percentage of attacks ($\card{\setargs} = 175$).}
  \label{fig:percentage-175}
\end{figure}

\section{Comparison with Related Works}
\label{sec:related-works}

The relationship between argumentation semantics and the satisfiability problem
has been already considered in the literature, but less effort has been devoted to 
the study of a \SAT-based algorithm and its empirical evaluation.
For instance, in \cite{Besnard2004} three approaches determining semantics extensions are preliminary described, namely the \emph{equational checking}, the \emph{model checking}, and the \emph{satisfiability cheking} of which three different formulations for, respectively, stable extension, admissible set, and complete extension are presented from a theoretical perspective, without providing any empirical evaluation.

More recently, in \cite{Bistarelli2012}, and similarly in \cite{Amgoud2013}, relationships between argumentation semantics and constraint satisfaction problems are studied, with different formulation for each semantics or decision problem. 
In particular, \cite{Amgoud2013} proposes an extensive study of CSP formulations for decision problems related to stable, preferred, complete, grounded and admissible semantics, while \cite{Bistarelli2012} shows an empirical evaluation of their approach through their software ConArg, but for conflict-free, admissible, complete and stable extensions only. 

Probably the most relevant work is \cite{Dvorak2012}, where a method for computing credulous and skeptical acceptance for preferred, semi-stable, and stage semantics has been studied, implemented, and empirically evaluated using an algorithm based upon a NP-oracle, namely a \SAT{} solver. Differently from our work, this approach is focused on acceptance problems only
and does not address the problem of how to enumerate the extensions. 
As we do believe that the approach we showed in this paper can be easily adapted for dealing with both credulous acceptance 
(we have just to force the \SAT{} solver to consider a given argument as labelled \textin) 
and skeptical acceptance (we have just to check whether a given argument is in all the extensions), 
we have already started a theoretical and empirical investigation on this subject. Recently, a similar approach using \SAT{} techniques in the context of semi-stable and eager semantics has been provided in \cite{Wallner2013}. A detailed comparison with this approach is already planned and represents a important future work.

Finally, as the computation of the preferred extension using \cite{Caminada2009}'s labelling approach requires a maximisation process, at a first sight this seems to be quite close to a MaxSAT problem \cite{Ansotegui2013}, which is a generalisation of the satisfiability problem.  The idea is that sometimes some constraints of a problem can not be satisfied, and a solver should try to satisfy the maximum number of them. Although there are approaches aimed at finding the maximum \wrt{} set inclusion satisfiable constraints (\ie{} nOPTSAT\footnote{\url{www.star.dist.unige.it/~emanuele/nOPTSAT/}}), the MaxSAT problem is conceptually different from the problem of finding the preferred extensions. Indeed, for determining the preferred extensions we maximise the acceptability of a subset of variables, while in the MaxSAT problem it is not possible to bound such a maximisation to a subset of variables only. However, a deeper investigation that may lead to the definition of argumentation semantics as MaxSAT problems is already envisaged as a future work.

\section{Conclusions}
\label{sec:conclusions}
We presented a novel \SAT-based approach for preferred extension enumeration in abstract argumentation and assessed its performances by an empirical comparison with other state-of-the-art systems. The proposed approach turns out to be efficient and to generally outperform the best known dedicated algorithm and the ASP-based approach implemented in the ASPARTIX system. The proposed approach appears to be applicable for extension enumeration of other semantics (in particular stable and semi-stable) and this represents an immediate direction of future work.
As to performance assessment, we are not aware of other systematic comparisons concerning computation efficiency in Dung's framework apart the results presented in \cite{DBLP:conf/comma/NofalDA12}, where different test sets were used for each pairwise comparison, with a maximum argument cardinality of 45.
The comparison provided in \cite{Dvorak2011} is aimed just at showing the differences between the two different encoding of ASPARTIX.
Java-based tools mainly conceived for interactive use, like ConArg \cite{Bistarelli2012} or Dungine \cite{South08}, are not suitable for a systematic efficiency comparison on large test sets and could not be considered in this work. It can be remarked however that they adopt alternative solution strategies (translation to a CSP problem in ConArg, argument games in Dungine) whose performance evaluation is an important subject of future work. 

In addition, we will consider other procedures for generating random
argumentation frameworks, as well as argumentation frameworks derived
from knowledge bases. As pointed out by one of the reviewer, these
derived argumentation frameworks can be infinite: in these case,
providing a suitable algorithm using the most recent approaches for
representing infinite argumentation frameworks \cite{Baroni2012,Baroni2013} is an
interesting avenue for future research.

We are also currently working to integrate the proposed approach into the SCC-recursive schema introduced in \cite{AIJ05} 
to encompass several semantics (including grounded, preferred and stable semantics). 
More specifically, the approach proposed in this paper can be applied to the sub-frameworks involved in the base-case of the recursion: 
since such local application decreases the number of variables involved, we expect a dramatic performance increase.


\section*{Acknowledgement}
The authors thank the anonymous reviewers for their helpful comments. In addition, they thank Samir Nofal for kindly providing the source code of his algorithm.

\bibliographystyle{splncs03}
\bibliography{biblio}

\begin{thebibliography}{10}
\providecommand{\url}[1]{\texttt{#1}}
\providecommand{\urlprefix}{URL }

\bibitem{Amgoud2013}
Amgoud, L., Devred, C.: Argumentation frameworks as constraint satisfaction
  problems. Annals of Mathematics and Artificial Intelligence pp. 1--18 (2013)

\bibitem{Ansotegui2013}
Ans{\'o}tegui, C., Bonet, M.L., Levy, J.: Sat-based maxsat algorithms.
  Artificial Intelligence  196(0),  77 -- 105 (2013)

\bibitem{Audemard2009}
Audemard, G., Simon, L.: Predicting learnt clauses quality in modern sat
  solvers. In: Proceedings of IJCAI 2009. pp. 399--404 (2009)

\bibitem{Audemard2012}
Audemard, G., Simon, L.: Glucose 2.1.
  \url{http://www.lri.fr/~simon/?page=glucose} (2012)

\bibitem{KER2011}
Baroni, P., Caminada, M., Giacomin, M.: An introduction to argumentation
  semantics. Knowledge Engineering Review  26(4),  365--410 (2011)

\bibitem{baroni&giacomin2009}
Baroni, P., Giacomin, M.: Semantics of abstract argumentation systems. In:
  Argumentation in Artificial Intelligence, pp. 25--44. Springer (2009)

\bibitem{AIJ05}
Baroni, P., Giacomin, M., Guida, G.: {SCC}-recursiveness: a general schema for
  argumentation semantics. Artificial Intelligence  168(1-2),  165--210 (2005)

\bibitem{Baroni2013}
Baroni, P., Cerutti, F., Dunne, P.E., Giacomin, M.: Automata for infinite
  argumentation structures. Artificial Intelligence  203(0),  104 -- 150 (2013)

\bibitem{Baroni2012}
Baroni, P., Cerutti, F., Dunne, P., Giacomin, M.: {Computing with Infinite
  Argumentation Frameworks: The Case of AFRAs}. In: Theorie and Applications of
  Formal Argumentation, Lecture Notes in Computer Science, vol. 7132, pp.
  197--214. Springer Berlin Heidelberg (2012)

\bibitem{Besnard2004}
Besnard, P., Doutre, S.: Checking the acceptability of a set of arguments. In:
  Proceedings of NMR 2004. pp. 59--64 (2004)

\bibitem{Biere2009}
Biere, A.: P\{re,ic\}osat@sc'09. In: SAT Competition 2009 (2009)

\bibitem{Bistarelli2012}
Bistarelli, S., Santini, F.: Modeling and solving afs with a constraint-based
  tool: Conarg. In: Theory and Applications of Formal Argumentation, vol. 7132,
  pp. 99--116. Springer (2012)

\bibitem{Caminada2006}
Caminada, M.: On the issue of reinstatement in argumentation. In: Proceedings
  of JELIA 2006. pp. 111--123 (2006)

\bibitem{Caminada06}
Caminada, M.: Semi-stable semantics. In: Proceedings of COMMA 2006. pp.
  121--130 (2006)

\bibitem{Caminada2009}
Caminada, M., Gabbay, D.M.: A logical account of formal argumentation. Studia
  Logica (Special issue: new ideas in argumentation theory)  93(2--3),
  109--145 (2009)

\bibitem{dimop-etal:1999}
Dimopoulos, Y., Nebel, B., Toni, F.: Preferred arguments are harder to compute
  than stable extensions. In: Proceedings of IJCAI 1999. pp. 36--43 (1999)

\bibitem{DimopTorres:1996}
Dimopoulos, Y., Torres, A.: Graph theoretical structures in logic programs and
  default theories. Journal Theoretical Computer Science  170,  209--244 (1996)

\bibitem{Doutre:2001}
Doutre, S., Mengin, J.: Preferred extensions of argumentation frameworks: Query
  answering and computation. In: Proceedings of IJCAR 2001. pp. 272--288 (2001)

\bibitem{dung1995}
Dung, P.M.: On the acceptability of arguments and its fundamental role in
  nonmonotonic reasoning, logic programming, and n-person games. Artificial
  Intelligence  77(2),  321--357 (1995)

\bibitem{dungetal2006}
Dung, P., Mancarella, P., Toni, F.: A dialectic procedure for sceptical,
  assumption-based argumentation. In: Proceedings of COMMA 2006. pp. 145--156
  (2006)

\bibitem{dw:2009}
Dunne, P.E., Wooldridge, M.: Complexity of abstract argumentation. In:
  Argumentation in Artificial Intelligence, pp. 85--104. Springer (2009)

\bibitem{Dvorak2011}
Dv{\v{o}}r{\'a}k, W., Gaggl, S.A., Wallner, J., Woltran, S.: Making use of
  advances in answer-set programming for abstract argumentation systems. In:
  Proceedings of INAP 2011 (2011)

\bibitem{Dvorak2012}
Dv{\v{o}}r{\'a}k, W., J{\"a}rvisalo, M., Wallner, J.P., Woltran, S.:
  Complexity-sensitive decision procedures for abstract argumentation. In:
  Proceedings of KR 2012. AAAI Press (2012)

\bibitem{eglyetal2008bis}
Egly, U., Gaggl, S.A., Woltran, S.: Aspartix: Implementing argumentation
  frameworks using answer-set programming. In: Proceedings of ICLP 2008. pp.
  734--738 (2008)

\bibitem{Jimenez2012}
Jim{\'e}nez, S., de~la Rosa, T., Fern{\'a}ndez, S., Fern{\'a}ndez, F., Borrajo,
  D.: A review of machine learning for automated planning. Knowledge
  Engineering Review  27(4),  433--467 (2012)

\bibitem{Leoneetal2006}
Leone, N., Pfeifer, G., Faber, W., Eiter, T., Gottlob, G., Perri, S.,
  Scarcello, F.: The dlv system for knowledge representation and reasoning. ACM
  Transactions on Computational Logic  7(3),  499--562 (2006)

\bibitem{Modgil2009b}
Modgil, S., Caminada, M.: Proof theories and algorithms for abstract
  argumentation frameworks. In: Argumentation in Artificial Intelligence, pp.
  105--129. Springer (2009)

\bibitem{DBLP:conf/comma/NofalDA12}
Nofal, S., Dunne, P.E., Atkinson, K.: On preferred extension enumeration in
  abstract argumentation. In: Proceedings of COMMA 2012. pp. 205--216 (2012)

\bibitem{South08}
South, M., Vreeswijk, G., Fox, J.: Dungine: A java dung reasoner. In:
  Proceedings of COMMA 2008. pp. 360--368 (2008)

\bibitem{Wallner2013}
Wallner, J., Weissenbacher, G., Woltran, S.: Advanced sat techniques for
  abstract argumentation. In: Proceedings of CLIMA2013. pp. 138--154 (2013)

\end{thebibliography}

\end{document}